\newtheorem{theorem}{Theorem}
\newtheorem{definition}{Definition}
\newtheorem{lemma}{Lemma}
\newenvironment{proof}{{\noindent\it Proof.}\quad}{\hfill $\square$\par}
\begin{document}

\twocolumn[
\icmltitle{Unifying Graph Convolutional Neural Networks and Label Propagation}

\begin{icmlauthorlist}
	\icmlauthor{Hongwei Wang}{1}
	\icmlauthor{Jure Leskovec}{1}
\end{icmlauthorlist}

\icmlaffiliation{1}{Computer Science Department, Stanford University, Stanford, CA 94305, United States}
\icmlcorrespondingauthor{Hongwei Wang}{hongweiw@cs.stanford.edu}
\icmlkeywords{Graph neural networks; Label propagation algorithm}
\vskip 0.3in
]

\printAffiliationsAndNotice{}

\begin{abstract}
	Label Propagation (LPA) and Graph Convolutional Neural Networks (GCN) are both message passing algorithms on graphs.
	Both solve the task of node classification but LPA propagates node label information across the edges of the graph, while GCN propagates and transforms node feature information.
	However, while conceptually similar, theoretical relation between LPA and GCN has not yet been investigated.
	Here we study the relationship between LPA and GCN in terms of two aspects: (1) \textit{feature/label smoothing} where we analyze how the feature/label of one node is spread over its neighbors; And, (2) \textit{feature/label influence} of how much the initial feature/label of one node influences the final feature/label of another node.
	Based on our theoretical analysis, we propose an end-to-end model that unifies GCN and LPA for node classification.
	In our unified model, edge weights are learnable, and the LPA serves as regularization to assist the GCN in learning proper edge weights that lead to improved classification performance.
	Our model can also be seen as learning attention weights based on node labels, which is more task-oriented than existing feature-based attention models.
	In a number of experiments on real-world graphs, our model shows superiority over state-of-the-art GCN-based methods in terms of node classification accuracy.
\end{abstract}

\section{Introduction}
	Consider the problem of node classification in a graph, where the goal is to learn a mapping $\mathcal M: \mathcal V \rightarrow \mathcal L$ from node set $\mathcal V$ to label set $\mathcal L$.
	Solution to this problem is widely applicable to various scenarios, e.g., inferring income of users in a social network or classifying scientific articles in a citation network.
	Different from a generic machine learning problem where samples are independent from each other, nodes are connected by edges in the graph, which provide additional information and require more delicate modeling.
	To capture the graph information, researchers have mainly designed models on the assumption that labels and features vary \textit{smoothly} over the edges of the graph. 
	In particular, on the label side $\mathcal L$, node labels are propagated and aggregated along edges in the graph, which is known as \textit{Label Propagation Algorithm} (LPA) \citep{zhu2005semi, zhou2004learning, zhang2007hyperparameter, wang2008label, karasuyama2013manifold, gong2017label, liu2018learning};
	On the node side $\mathcal V$, node features are propagated along edges and transformed through neural network layers, which is known as \textit{Graph Convolutional Neural Networks} (GCN) \citep{kipf2017semi, hamilton2017inductive, li2018deeper, xu2018representation,liao2019lanczosnet,xu2019powerful,qu2019gmnn}.
	
	GCN and LPA are related in that they propagate features and labels on the two sides of the mapping $\mathcal M$, respectively.
	However, the relationship between GCN and LPA has not yet been investigated.
	Specifically, what is the theoretical relationship between GCN and LPA, and how can they be combined to develop a more accurate model for node classification in graphs?
	
	Here we study the theoretical relationship between GCN and LPA from two viewpoints:
	(1) \textit{Feature/label smoothing}, where we show that the intuition behind GCN/LPA is smoothing features/labels of nodes across the edges of the graph, i.e., one node's feature/label equals the weighted average of features/labels of its neighbors.
	We prove that if the weights of edges in a graph smooth the node features with high precision, they also smooth the node labels with guaranteed upper bound on the smoothing error.
	And, (2) \textit{feature/label influence},
	where we quantify how much the initial feature/label of node $v_b$ influences the output feature/label of node $v_a$ in GCN/LPA by studying the Jacobian/gradient of node $v_b$ with respect to node $v_a$.
	We also prove the quantitative relationship between feature influence and label influence.
	
	Based on the above theoretical analysis, we propose a unified model GCN-LPA for node classification.
	We show that the key to improving the performance of GCN is to enable nodes within the same class/label to connect more strongly with each other by making edge weights/strengths trainable.
	Then we prove that increasing the strength of edges between the nodes of the same class is equivalent to increasing the accuracy of LPA's predictions.
	Therefore, we can first learn the optimal edge weights by minimizing the loss of predictions in LPA, then plug the optimal edge weights into a GCN to learn node representations and do final classification.
	In GCN-LPA, we further combine the two steps together and train the whole model in an end-to-end fashion, where the LPA part serves as regularization to assist the GCN part in learning proper edge weights that benefit the separation of different node classes.
	It is worth noticing that GCN-LPA can also be seen as learning \textit{attention weights} for edges based on \textit{node label information}, which requires less handcrafting and is more task-oriented than existing work that learns attention weights based on \textit{node feature similarity} \citep{velivckovic2018graph,thekumparampil2018attention,zhang2018gaan,liu2019geniepath}.
	
	We conduct extensive experiments on five datasets, and the results indicate that our model outperforms state-of-the-art methods in terms of classification accuracy.
	The experimental results also show that combining GCN and LPA together is able to learn more informative edge weights thereby leading to better performance.

\section{Unifying GCN and LPA}
    In this section, we first formulate the node classification problem and briefly introduce LPA and GCN.
    We then prove their relationship from the viewpoints of smoothing and influence.
    Based on the theoretical findings, we propose a unified model GCN-LPA, and analyze why our model is theoretically superior to vanilla GCN.
    
	\subsection{Problem Formulation and Preliminaries}
		We begin by describing the problem of node classification on graphs and introducing notation.
		Consider a graph $\mathcal G = (\mathcal V, A, X, Y)$, where $\mathcal V = \{v_1,\cdots, v_n\}$ is the set of nodes, $A \in \mathbb R^{n \times n}$ is the adjacency matrix (self-loops are included), $X$ is the feature matrix of nodes and $Y$ is labels of nodes.
		$a_{ij}$ (the $ij$-th entry of $A$) is the weight of the edge connecting $v_i$ and $v_j$.
		$\mathcal N(v)$ denotes the set of immediate neighbors of node $v$ in graph $\mathcal G$.
		Each node $v_i$ has a feature vector ${\bf x}_i$ which is the $i$-th row of $X$, while only the first $m$ nodes have labels $y_1,\cdots, y_m$ from a label set $\mathcal L = \{1,\cdots, c\}$.
		The goal is to learn a mapping $\mathcal M: \mathcal V \rightarrow \mathcal L$ and predict labels of unlabeled nodes.
	
		\textbf{Label Propagation Algorithm}.
		LPA assumes that two connected nodes are likely to have the same label, and thus it propagates labels iteratively along the edges.
		Let $ Y^{(k)} = [y_1^{(k)},\cdots,  y_n^{(k)}]^\top \in \mathbb R^{n \times c}$ be the soft label matrix in iteration $k>0$, in which the $i$-th row $y_i^{{(k)}\top}$ denotes the predicted label distribution for node $v_i$ in iteration $k$.
		When $k=0$, the initial label matrix $Y^{(0)} = [y_1^{(0)},\cdots,  y_n^{(0)}]^\top$ consists of one-hot label indicator vectors $y_i^{(0)}$ for $i = 1,\cdots, m$ (i.e., labeled nodes) or zero vectors otherwise (i.e., unlabeled nodes).
		Let $D$ be the diagonal degree matrix for $A$ with entries $d_{ii} = \sum_j a_{ij}$.
		Then LPA \citep{zhu2005semi} in iteration $k$ is formulated as the following two steps:
		\begin{eqnarray}
			Y^{(k+1)} = D^{-1} A \ Y^{(k)}, \label{eq:lp}\\
			y_i^{(k+1)} = y_i^{(0)}, \ \forall \ i \le m. \label{eq:lp_2}
		\end{eqnarray}
		In Eq. (\ref{eq:lp}), all nodes propagate labels to their neighbors according to normalized edge weights.
		Then in Eq. (\ref{eq:lp_2}), labels of all labeled nodes are reset to their initial values, because LPA wants to persist labels of nodes which are labeled so that unlabeled nodes do not overpower the labeled ones as the initial labels would otherwise fade away.
	
		\textbf{Graph Convolutional Neural Network}.
		GCN is a multi-layer feedforward neural network that propagates and transforms node features across the graph.
		The layer-wise propagation rule of GCN is $X^{(k+1)} = \sigma (D^{-\frac{1}{2}} A D^{-\frac{1}{2}} X^{(k)} W^{(k)})$, where $W^{(k)}$ is trainable weight matrix in the $k$-th layer, $\sigma(\cdot)$ is an activation function such as $\text{ReLU}$, and $X^{(k)} = [{\bf x}_1^{(k)},\cdots, {\bf x}_n^{(k)}]^\top$ are the $k$-th layer node representations with $X^{(0)} = X$.
		To align with the above LPA, we use $D^{-1}A$ as the normalized adjacency matrix instead of the symmetric one $D^{-\frac{1}{2}} A D^{-\frac{1}{2}}$ proposed by \cite{kipf2017semi}.
		Therefore, the feature propagation scheme of GCN in layer $k$ is:
		\begin{equation}
		\label{eq:gcn}
			X^{(k+1)} = \sigma \left( D^{-1} A X^{(k)} W^{(k)} \right).
		\end{equation}
		Notice similarity between Eqs. (\ref{eq:lp}) and (\ref{eq:gcn}). Next we shall study and uncover the relationship between the two equations.

	\subsection{Feature Smoothing and Label Smoothing}
		The intuition behind both LPA and GCN is \textit{smoothing} \citep{zhu2003semi,li2018deeper}:
		In LPA, the final label of a node is the weighted average of labels of its neighbors:
		\begin{equation}
			y_i^{(\infty)} = \frac{1}{d_{ii}} \sum_{j \in \mathcal N(i)} a_{ij} y_j^{(\infty)}.
		\end{equation}
		In GCN, the final node representation is also the weighted average of representations of its neighbors if we assume $\sigma$ is identity function and $W^{(\cdot)}$ are identity matrices:
		\begin{equation}
			{\bf x}_i^{(\infty)} = \frac{1}{d_{ii}} \sum_{j \in \mathcal N(i)} a_{ij} {\bf x}_j^{(\infty)}.
		\end{equation}
		Next we show the relationship between feature smoothing and label smoothing:
		
		\begin{theorem}
		\label{thm:smoothing}
			\rm\textbf{(Relationship between feature smoothing and label smoothing)}
			Suppose that the latent ground-truth mapping $\mathcal M: {\bf x} \rightarrow y$ from node features to node labels is differentiable and satisfies $L$-Lipschitz constraint, i.e., $| \mathcal M({\bf x}_1) - \mathcal M({\bf x}_2) | \leq L \| {\bf x}_1 - {\bf x}_2 \|_2$ for any ${\bf x}_1$ and ${\bf x}_2$ ($L$ is a constant).
			If the edge weights $\{a_{ij}\}$ approximately smooth ${\bf x}_i$ over its immediate neighbors with error $\epsilon_i$, i.e.,
			\begin{equation}
				{\bf x}_i = \frac{1}{d_{ii}} \sum_{j \in \mathcal N(i)} a_{ij} {\bf x}_j + \epsilon_i,
			\end{equation}
			then the edge weights $\{a_{ij}\}$ also approximately smooth $y_i$ over its immediate neighbors with the following approximation error:
			\begin{equation}
				\big| y_i - \frac{1}{d_{ii}} \sum_{j \in \mathcal N(i)} a_{ij} y_j \big| \leq L \| \epsilon_i\|_2 + o \big( \max_{j \in \mathcal N(i)} ( \| {\bf x}_j - {\bf x}_i \|_2 ) \big),
			\end{equation}
			where $o(\alpha)$ denotes a higher order infinitesimal than $\alpha$.
		\end{theorem}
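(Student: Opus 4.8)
The plan is to linearize the ground-truth map $\mathcal{M}$ around ${\bf x}_i$ and then use the feature-smoothing hypothesis to control the resulting first-order term. First I would write $y_i = \mathcal{M}({\bf x}_i)$ and $y_j = \mathcal{M}({\bf x}_j)$ for every $j \in \mathcal{N}(i)$. Since $\mathcal{M}$ is differentiable, a first-order Taylor expansion at ${\bf x}_i$ gives $y_j = y_i + \nabla\mathcal{M}({\bf x}_i)^\top ({\bf x}_j - {\bf x}_i) + o(\|{\bf x}_j - {\bf x}_i\|_2)$ for each neighbor $j$.

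Next I would form the weighted average $\frac{1}{d_{ii}}\sum_{j\in\mathcal{N}(i)} a_{ij} y_j$ and substitute this expansion term by term. Because $\sum_{j\in\mathcal{N}(i)} a_{ij} = d_{ii}$, the constant term contributes exactly $y_i$; the linear term contributes $\nabla\mathcal{M}({\bf x}_i)^\top \big(\frac{1}{d_{ii}}\sum_{j} a_{ij}{\bf x}_j - {\bf x}_i\big)$, which by the smoothing hypothesis ${\bf x}_i = \frac{1}{d_{ii}}\sum_j a_{ij}{\bf x}_j + \epsilon_i$ equals $-\nabla\mathcal{M}({\bf x}_i)^\top \epsilon_i$; and since the coefficients $\{a_{ij}/d_{ii}\}_{j\in\mathcal{N}(i)}$ are nonnegative and sum to one, the aggregated remainder remains $o\big(\max_{j\in\mathcal{N}(i)}\|{\bf x}_j - {\bf x}_i\|_2\big)$. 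Rearranging yields the identity $y_i - \frac{1}{d_{ii}}\sum_j a_{ij} y_j = \nabla\mathcal{M}({\bf x}_i)^\top \epsilon_i + o\big(\max_{j\in\mathcal{N}(i)}\|{\bf x}_j - {\bf x}_i\|_2\big)$.

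Finally I would bound the first-order term: by the Cauchy--Schwarz inequality $|\nabla\mathcal{M}({\bf x}_i)^\top \epsilon_i| \le \|\nabla\mathcal{M}({\bf x}_i)\|_2\,\|\epsilon_i\|_2$, and the $L$-Lipschitz property of the differentiable map $\mathcal{M}$ forces $\|\nabla\mathcal{M}({\bf x}_i)\|_2 \le L$ (the case $\nabla\mathcal{M}({\bf x}_i)=0$ is trivial; otherwise evaluate the directional derivative of $\mathcal{M}$ at ${\bf x}_i$ along the unit vector $\nabla\mathcal{M}({\bf x}_i)/\|\nabla\mathcal{M}({\bf x}_i)\|_2$ and invoke the Lipschitz bound). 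Taking absolute values in the identity above and applying the triangle inequality then gives the stated error bound $\big| y_i - \frac{1}{d_{ii}}\sum_{j\in\mathcal{N}(i)} a_{ij} y_j \big| \le L\|\epsilon_i\|_2 + o\big(\max_{j\in\mathcal{N}(i)}\|{\bf x}_j - {\bf x}_i\|_2\big)$.

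The routine parts are the Taylor expansion and Cauchy--Schwarz; the step needing a bit of care is verifying that averaging the per-neighbor remainders $o(\|{\bf x}_j - {\bf x}_i\|_2)$ against the normalized weights still yields something that is $o\big(\max_{j\in\mathcal{N}(i)}\|{\bf x}_j - {\bf x}_i\|_2\big)$ — this relies on the coefficients forming a finite convex combination, so the weighted average of the remainders is dominated by the largest one. A secondary point worth noting is that the bound is only informative in the regime where $\max_{j\in\mathcal{N}(i)}\|{\bf x}_j - {\bf x}_i\|_2$ is small, i.e.\ precisely the smoothing regime the theorem targets, since that is where the higher-order infinitesimal is negligible next to $L\|\epsilon_i\|_2$.
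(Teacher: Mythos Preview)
Your proposal is correct and follows essentially the same argument as the paper's own proof: a first-order Taylor expansion of $\mathcal{M}$ at ${\bf x}_i$, use of the convex-combination identity $\sum_{j\in\mathcal N(i)} a_{ij}/d_{ii}=1$ together with the feature-smoothing hypothesis to reduce the linear term to $-\nabla\mathcal{M}({\bf x}_i)^\top \epsilon_i$, then Cauchy--Schwarz plus the $L$-Lipschitz bound on $\|\nabla\mathcal{M}({\bf x}_i)\|_2$ and the triangle inequality. Your added remarks justifying $\|\nabla\mathcal{M}({\bf x}_i)\|_2\le L$ and the averaging of the Peano remainders are slightly more explicit than the paper's presentation but do not change the route.
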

		
		Proof of Theorem \ref{thm:smoothing} is in Appendix \ref{app:a}.
		Theorem \ref{thm:smoothing} indicates that label smoothing is theoretically guaranteed by feature smoothing.
		Note that if we treat edge weights $\{a_{ij}\}$ learnable, then feature smoothing (i.e., $\epsilon_i \rightarrow 0$) can be directly achieved by keeping node features ${\bf x}_i$ fixed while setting $\{a_{ij}\}$ appropriately, without resorting to feature propagation in a multi-layer GCN.
		Therefore, a simple approach to exploit this theorem would be to learn $\{a_{ij}\}$ by reconstructing node feature ${\bf x}_i$ from its neighbors, then use the learned $\{a_{ij}\}$ to reconstruct node labels $y_i$ \citep{karasuyama2013manifold}.
		
		As shown in Theorem \ref{thm:smoothing}, the approximation error of labels is dominated by $L \| \epsilon_i\|_2$.
		However, this error could be fairly large in practice because:
		(1) The number of immediate neighbors for a given node may be too small to reconstruct its features perfectly, especially in the case where node features are high-dimensional and sparse.
		For example, in a citation network where node features are one-hot bag-of-words vectors, the feature of one article can never be precisely reconstructed if none of its neighboring articles contains the specific word that appears in this article.
		As a result, $\| \epsilon_i\|_2$ will be non-neglibible.
		This explains why it is beneficial to apply LPA and GCN for multiple iterations/layers in order to include information from farther away neighbors.
		(2) The ground-truth mapping $\mathcal M$ may not be sufficiently smooth due to the complex structure of latent manifold and possible noise, which fails to satisfy $L$-Lipschitz constraint.
		In other words, the constant $L$ will be extremely large.
		
	\subsection{Feature Influence and Label Influence}
	    To address the above concerns and extend our analysis, we next consider GCN and LPA with multiple layers/iterations, and do not impose any constraint on the ground-truth mapping $\mathcal M$.
		
		Consider two nodes $v_a$ and $v_b$ in a graph.
		Inspired by \cite{koh2017understanding} and \cite{xu2018representation}, we study the relationship between GCN and LPA in terms of influence, i.e., how the output feature/label of $v_a$ will change if the initial feature/label of $v_b$ is varied slightly.
		Technically, the feature/label influence is measured by the Jacobian/gradient of the output feature/label of $v_a$ with respect to the initial feature/label of $v_b$.
		Denote ${\bf x}_a^{(k)}$ as the $k$-th layer representation vector of $v_a$ in GCN, and ${\bf x}_b$ as the initial feature vector of $v_b$.
		We quantify the feature influence of $v_b$ on $v_a$ as follows:
		
		\begin{definition}
			\rm\textbf{(Feature influence)}
			The feature influence of node $v_b$ on node $v_a$ after $k$ layers of GCN is the L1-norm of the expected Jacobian matrix $\partial {\bf x}_a^{(k)} / \partial {\bf x}_b$:
			\begin{equation}
				I_f(v_a, v_b; k) = \big\| \mathbb E \big[ \partial {\bf x}_a^{(k)} / \partial {\bf x}_b \big] \big\|_1.
			\end{equation}
			The normalized feature influence is then defined as
			\begin{equation}
				\tilde I_f(v_a, v_b; k) = \frac{I_f(v_a, v_b; k)}{\sum\nolimits_{v_i \in \mathcal V} I_f(v_a, v_i; k)}.
			\end{equation}
		\end{definition}
		
		We also consider the label influence of node $v_b$ on node $v_a$ in LPA (this implies that $v_a$ is unlabeled and $v_b$ is labeled).
		Since different label dimensions of $y_i^{(\cdot)}$ do not interact with each other in LPA, we assume that all $y_i$ and $y_i^{(\cdot)}$ are scalars within $[0, 1]$ (i.e., a binary classification) for simplicity.
		Label influence is defined as follows:
		
		\begin{definition}
			\rm\textbf{(Label influence)}
			The label influence of labeled node $v_b$ on unlabeled node $v_a$ after $k$ iterations of LPA is the gradient of $y_a^{(k)}$ with respect to $y_b$:
			\begin{equation}
				I_l(v_a, v_b; k) = \partial y_a^{(k)} / \partial y_b.
			\end{equation}
			
		\end{definition}
		
		The following theorem shows the relationship between feature influence and label influence:
		
		\begin{theorem}
		\label{thm:influence}
			\rm\textbf{(Relationship between feature influence and label influence)}
			Assume the activation function used in GCN is ReLU.
			Denote $v_a$ as an unlabeled node, $v_b$ as a labeled node, and $\beta$ as the fraction of unlabeled nodes.
			Then the label influence of $v_b$ on $v_a$ after $k$ iterations of LPA equals, in expectation, to the cumulative normalized feature influence of $v_b$ on $v_a$ after $k$ layers of GCN:
			\begin{equation}
				\mathbb E \big[ I_l(v_a, v_b; k) \big] = \sum\nolimits_{j=1}^k \beta^j \tilde I_f(v_a, v_b; j).
			\end{equation}
		\end{theorem}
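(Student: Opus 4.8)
The plan is to evaluate both sides of the identity in closed form in terms of powers of the random-walk matrix $P := D^{-1}A$ and then match them. Throughout I use that, since $d_{ii}=\sum_j a_{ij}$, the matrix $P$ is row-stochastic, and hence so is every power $P^k$. \textbf{Feature influence.} First I would unroll the Jacobian of Eq.~(\ref{eq:gcn}) with $\sigma=\mathrm{ReLU}$ by the chain rule: $\partial{\bf x}_a^{(k)}/\partial{\bf x}_b$ is a sum, over all length-$k$ walks $b=w_0,w_1,\dots,w_k=a$ in $\mathcal G$, of a product of $k$ factors, each factor being a $0/1$ diagonal matrix (recording which ReLU units fired) times the scalar entry $(D^{-1}A)_{w_{t-1}w_t}$ times the layer-$t$ weight matrix. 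Invoking the standing assumption (as in \cite{xu2018representation}) that each unit fires with the same probability $\rho$ independently of the walk, taking expectation turns every diagonal matrix into $\rho I$; since the sequence of weight matrices encountered along the chain is the same for every length-$k$ walk, the scalar edge factors combine into $[P^k]_{ab}$, so that $I_f(v_a,v_b;k)=C_k\,[P^k]_{ab}$ for a constant $C_k$ independent of $a$ and $b$. Normalizing and using row-stochasticity of $P^k$ then gives $\tilde I_f(v_a,v_b;k)=[P^k]_{ab}/\sum_{v_i}[P^k]_{ai}=[P^k]_{ab}$.

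\textbf{Label influence.} Next I would split $\mathcal V$ into the labeled and the unlabeled nodes and write $P$ in the corresponding block form, with diagonal blocks $P_{uu},P_{\ell\ell}$ and off-diagonal blocks $P_{u\ell},P_{\ell u}$. Unrolling Eqs.~(\ref{eq:lp})--(\ref{eq:lp_2}): the labeled rows stay frozen at $Y^{(0)}_\ell$ while the unlabeled rows satisfy $Y^{(k+1)}_u = P_{u\ell}Y^{(0)}_\ell + P_{uu}Y^{(k)}_u$, and since $Y^{(0)}_u=0$ this telescopes to $Y^{(k)}_u=\big(\sum_{j=0}^{k-1}P_{uu}^{j}\big)P_{u\ell}Y^{(0)}_\ell$. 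Differentiating with respect to $y_b$ (a labeled node) gives $I_l(v_a,v_b;k)=\sum_{j=0}^{k-1}[P_{uu}^{j}P_{u\ell}]_{ab}$. Expanding each block product as a sum over walks, the $j$-th summand collects precisely the length-$(j+1)$ walks from $v_a$ to $v_b$ whose first $j+1$ vertices lie in the unlabeled set and whose last vertex is $v_b$.

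\textbf{Taking the expectation.} Finally I would average over the random choice of the labeled set, where each node is unlabeled with probability $\beta$, conditioned on $v_b$ being labeled. For a fixed length-$m$ walk $v_a=w_0,\dots,w_m=v_b$ on distinct vertices, the event that $w_0,\dots,w_{m-1}$ are all unlabeled has probability $\beta^m$; summing over all length-$m$ walks converts $\mathbb E\big[\sum_{j=0}^{k-1}[P_{uu}^{j}P_{u\ell}]_{ab}\big]$ into $\sum_{m=1}^{k}\beta^m[P^m]_{ab}$. Combining with the first part, $\mathbb E[I_l(v_a,v_b;k)]=\sum_{m=1}^{k}\beta^m[P^m]_{ab}=\sum_{j=1}^{k}\beta^j\,\tilde I_f(v_a,v_b;j)$, which is the claim.

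I expect two places to require care. The first is the feature-influence step: reducing the expected Jacobian to ``scalar $[P^k]_{ab}$ times a walk-independent matrix'' is essentially the influence-distribution lemma of \cite{xu2018representation}, and one must check it still goes through with the row normalization $D^{-1}A$ in place of the symmetric one and after taking the $L_1$ norm. The second, and the more delicate one, is the ``distinct vertices'' reduction: because $A$ contains self-loops, the walks counted by $[P^m]_{ab}$ and by $[P_{uu}^{j}P_{u\ell}]_{ab}$ may revisit vertices (or revisit $v_b$), so the equality $\mathbb E[\,\cdot\,]=\sum_m\beta^m[P^m]_{ab}$ holds exactly only in the leading-order, locally tree-like regime; I would state that assumption explicitly rather than track the lower-order corrections.
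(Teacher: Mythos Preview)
Your proposal is correct and follows essentially the same three-step argument as the paper: (i) identify $\tilde I_f(v_a,v_b;k)$ with the length-$k$ random-walk probability $[P^k]_{ab}$ via the Xu et al.\ lemma, (ii) express $I_l(v_a,v_b;k)$ as a sum over length-$\le k$ walks restricted to unlabeled intermediate nodes, and (iii) average over independent labeling to turn the restriction into the factor $\beta^j$. The only differences are cosmetic---you use block-matrix notation $\sum_{j}[P_{uu}^{\,j}P_{u\ell}]_{ab}$ where the paper writes out path sums $\sum_{\mathcal U_j}\prod\tilde a$, and you explicitly flag the ``distinct vertices'' caveat that the paper applies silently.
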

		
		Proof of Theorem \ref{thm:influence} is in Appendix \ref{app:b}.
		Intuitively, Theorem \ref{thm:influence} shows that if $v_b$ has high label influence on $v_a$, then the initial feature vector of $v_b$ will also affect the output feature vector of $v_a$ to a large extent.
		Theorem \ref{thm:influence} provides the theoretical guideline for designing our unified model in the next subsection.

	\subsection{The Unified Model}
	\label{sec:model}
		Before introducing the proposed model, we first rethink the GCN method and see what an ideal node representation should be like.
		Since we aim to classify nodes, the perfect node representation would be such that nodes with the same label are embedded close together, which would give a large separation between different classes.
		Intuitively, the key to achieve this goal is to enable nodes within the same class to connect more strongly with each other, so that they are pushed together by the GCN.
		We can therefore make edge strengths/weights trainable, then learn to increase the \textit{intra-class feature influence} for each class $i$:
		\begin{equation}
			\label{eq:intra_class_feature_influence}
			\sum_{v_a, v_b: y_a = i, y_b = i} \tilde I_f(v_a, v_b)
		\end{equation}
		by adjusting edge weights.
		However, this requires operating on Jacobian matrices with the size of $d^{(0)} \times d^{(K)}$ ($d^{(0)}$ and $d^{(K)}$ are the dimensions of initial and output features, respectively), which is impractical if initial node features are high-dimensional.
		Fortunately, we can turn to optimizing the \textit{intra-class label influence} instead of Eq. (\ref{eq:intra_class_feature_influence}), i.e.,
		\begin{equation}
			\label{eq:intra_class_label_influence}
			\sum_{v_a, v_b: y_a = i, y_b = i} I_l(v_a, v_b),
		\end{equation}
		according to Theorem \ref{thm:influence}.
		We further show that, by the following theorem, the total intra-class label influence on a given node $v_a$ is proportional to the probability that $v_a$ is classified correctly by LPA:
		\begin{theorem}
		\label{thm:lpa}
			\rm\textbf{(Relationship between label influence and LPA's prediction)}
			Consider a given node $v_a$ and its label $y_a$.
			If we treat node $v_a$ as unlabeled, then the total label influence of nodes with label $y_a$ on node $v_a$ is proportional to the probability that node $v_a$ is classified as $y_a$ by LPA:
			\begin{equation}
				\sum_{v_b: y_b = y_a} I_l(v_a, v_b; k) \propto \Pr \big( \hat y_a^{lpa} = y_a \big),
			\end{equation}
			where $\hat y_a^{lpa}$ is the predicted label of $v_a$ using a $k$-iteration LPA.
		\end{theorem}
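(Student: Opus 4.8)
The plan is to exploit the linearity of label propagation: I would express the soft label of $v_a$ explicitly as a nonnegative linear combination of the initial labels, identify the sum $\sum_{v_b:\, y_b = y_a} I_l(v_a, v_b; k)$ with the $y_a$-th coordinate of that soft label, and finally read this coordinate off as a constant multiple of the probability that a $k$-iteration LPA assigns label $y_a$ to $v_a$.

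First I would treat $v_a$ as unlabeled and split $\mathcal V$ into the labeled set $\mathcal L$ and the unlabeled set $\mathcal U$, writing $P = D^{-1}A$ (a row-stochastic, entrywise nonnegative matrix) with blocks $P_{\mathcal U\mathcal U}$ and $P_{\mathcal U\mathcal L}$. Unfolding one round of Eqs.~(\ref{eq:lp})--(\ref{eq:lp_2}) and using that the reset step keeps $Y^{(k)}_{\mathcal L} = Y^{(0)}_{\mathcal L}$ while $Y^{(0)}_{\mathcal U} = 0$, a short induction on $k$ gives
\begin{equation}
    Y^{(k)}_{\mathcal U} = M^{(k)}\, Y^{(0)}_{\mathcal L}, \qquad M^{(k)} = \Big( \sum\nolimits_{t=0}^{k-1} (P_{\mathcal U\mathcal U})^{t} \Big) P_{\mathcal U\mathcal L},
\end{equation}
where every entry of $M^{(k)}$ is nonnegative. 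In the scalar (binary) case the row of $v_a$ reads $y^{(k)}_a = \sum_{v_b \in \mathcal L} M^{(k)}_{ab}\, y_b$, so by the definition of label influence $I_l(v_a, v_b; k) = \partial y^{(k)}_a / \partial y_b = M^{(k)}_{ab}$.

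Next I would return to the $c$-class setting. Because distinct label coordinates do not interact under $D^{-1}A$, the same recursion applied to the one-hot rows $y^{(0)}_b$ (the indicator of class $y_b$) gives, for the soft label vector $y^{(k)}_a \in \mathbb R^{c}$,
\begin{equation}
    \big( y^{(k)}_a \big)_i = \sum\nolimits_{v_b \in \mathcal L:\, y_b = i} M^{(k)}_{ab} = \sum\nolimits_{v_b:\, y_b = i} I_l(v_a, v_b; k),
\end{equation}
and taking $i = y_a$ shows that the theorem's left-hand side equals $\big( y^{(k)}_a \big)_{y_a}$. Interpreting the output of a $k$-iteration LPA as drawing the label of $v_a$ from its normalized soft label, we get
\begin{equation}
    \Pr\big( \hat y_a^{lpa} = y_a \big) = \frac{\big( y^{(k)}_a \big)_{y_a}}{\sum_{i=1}^{c} \big( y^{(k)}_a \big)_i} = \frac{\sum_{v_b:\, y_b = y_a} I_l(v_a, v_b; k)}{\sum_{v_b \in \mathcal L} I_l(v_a, v_b; k)}.
\end{equation}
Since the denominator is the full row sum of $M^{(k)}$, a positive number that depends on $v_a$, $k$ and $\mathcal G$ but not on the class $y_a$, the claimed proportionality follows at once.

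The main obstacle here is conceptual rather than computational: one must commit to a meaning of ``the probability that $v_a$ is classified as $y_a$ by LPA'', since bare LPA outputs a deterministic argmax. The interpretation that makes the identity exact is to treat the normalized soft label $y^{(k)}_a$ as a distribution over predicted classes, and the proportionality in the statement is exactly what lets the normalizing row sum $\sum_{v_b \in \mathcal L} I_l(v_a, v_b; k)$ --- which equals $1$ only in the $k \to \infty$ limit on a connected graph --- be dropped. Two minor loose ends are the bookkeeping of moving $v_a$ from $\mathcal L$ to $\mathcal U$ (this only redefines the partition and does not affect the argument) and the tacit non-degeneracy assumption that $v_a$ can reach some labeled node within $k$ steps through $\mathcal U$, so that the normalizing row sum is strictly positive.
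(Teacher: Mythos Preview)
Your proof is correct and follows the same high-level strategy as the paper: identify the $y_a$-th coordinate of the soft label $y_a^{(k)}$ with the sum $\sum_{v_b:\, y_b=y_a} I_l(v_a,v_b;k)$, then read off the proportionality from the normalized soft-label interpretation of $\Pr(\hat y_a^{lpa}=y_a)$.

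The only genuine difference is in the technical device used to obtain $I_l(v_a,v_b;k)$. The paper invokes its Lemma~\ref{lemma:2}, which expresses the label influence as a sum over unlabeled-nodes-only paths $\mathcal U_j^{a\to b}$ of length $j\le k$, and then observes that $y_a^{(k)}[y_a]$ is exactly this path sum aggregated over all $v_b$ with label $y_a$. You instead unfold the LPA recursion in block-matrix form to get $Y^{(k)}_{\mathcal U}=M^{(k)}Y^{(0)}_{\mathcal L}$ with $M^{(k)}=\big(\sum_{t=0}^{k-1}(P_{\mathcal U\mathcal U})^t\big)P_{\mathcal U\mathcal L}$, and read $I_l(v_a,v_b;k)=M^{(k)}_{ab}$ directly from linearity. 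The two representations are of course the same object (expanding $(P_{\mathcal U\mathcal U})^t P_{\mathcal U\mathcal L}$ entrywise gives precisely the path sums of Lemma~\ref{lemma:2}), but your route is more self-contained here since it does not rely on a separately proved path lemma, while the paper's route makes the connection to random walks and to the feature-influence side of Theorem~\ref{thm:influence} more transparent. Your explicit remarks on the meaning of ``probability'' and on the role of the row-sum normalizer are also a welcome clarification that the paper leaves implicit.
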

		
		\begin{figure}
			\centering
			\captionsetup[subfigure]{justification=centering}
			\begin{subfigure}[b]{0.22\textwidth}
   				\includegraphics[width=\textwidth]{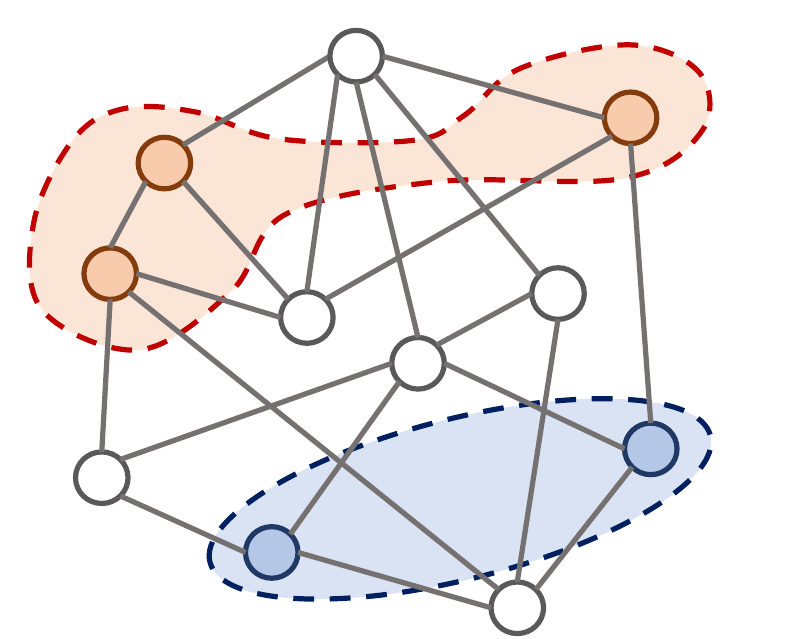}
   				\caption{A graph with two classes of nodes (red vs. blue)}
   				\label{fig:model_1}
			\end{subfigure}
			\hfill
			\begin{subfigure}[b]{0.22\textwidth}
				\includegraphics[width=\textwidth]{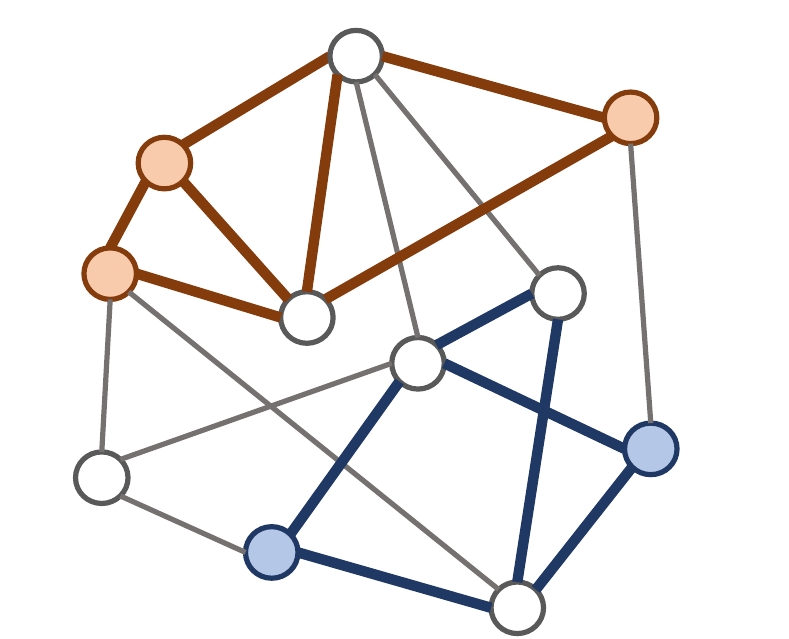}
				\caption{Potential intra-class edges (bold links)}
				\label{fig:model_2}
			\end{subfigure}
			\caption{A graph with two classes of nodes, while white nodes are unlabeled (Figure \ref{fig:model_1}). To ease the separation of the two classes, our model will increase the connecting strength among nodes within the same class (i.e., within one dotted circle), thereby increasing their feature/label influence on each other. In this way, our model is able to identify potential intra-class edges (bold links in Figure \ref{fig:model_2}) and strengthen their weights.}
			\label{fig:model}
		\end{figure}
		
		Proof of Theorem \ref{thm:lpa} is in Appendix \ref{app:c}.
		Theorem \ref{thm:lpa} indicates that, if edge weights $\{a_{ij}\}$ maximize the probability that $v_a$ is correctly classified by LPA, then they also maximize the intra-class label influence on node $v_a$.
		We can therefore first learn the optimal edge weights $A^*$ by minimizing the loss of predicted labels by LPA:\footnote{Here the optimal edge weights $A^*$ share the same topology as the original graph $\mathcal G$, meaning that we do not add or remove edges from $\mathcal G$ but only learning the weights of existing edges. See the end of this subsection for more discussion.}
		\begin{equation}
		\begin{split}
			A^* =& \mathop{\arg\min}_A L_{lpa}(A)\\
			=& \mathop{\arg\min}_A \ \frac{1}{m} \sum_{v_a: a \leq m} J(\hat y_a^{lpa}, y_a),
		\end{split}
		\end{equation}
		where $J$ is the cross-entropy loss, $\hat y_a^{lpa}$ and $y_a$ are the predicted label distribution of $v_a$ using LPA and the true one-hot label of $v_a$, respectively.\footnote{Here we somewhat abuse the notations for simplicity, since in Theorem \ref{thm:lpa} the two notations represent label category rather than label distribution. But the subtle difference can be easily distinguished based on context.}
		$a \leq m$ means $v_a$ is labeled.
		The optimal $A^*$ maximize the probability that each node is correctly labeled by LPA, thus also increasing the intra-class label influence (by Theorem \ref{thm:lpa}) and intra-class feature influence (by Theorem \ref{thm:influence}).
		Then we can apply $A^*$ and the corresponding $D^*$ to a GCN to predict labels:
		\begin{equation}
			\label{eq:k_gcn}
			X^{(k+1)} = \sigma ({D^*}^{-1} A^* X^{(k)} W^{(k)}), \ k = 0, 1,\cdots, K-1.
		\end{equation}
		We use $\hat y_a^{gcn}$, the $a$-th row of $X^{(K)}$, to denote the predicted label distribution of $v_a$ using the GCN specified in Eq. (\ref{eq:k_gcn}).
		The the optimal transformation matrices in the GCN can be learned by minimizing the loss of predicted labels by GCN:
		\begin{equation}
		\begin{split}
			W^* =& \mathop{\arg\min}_W L_{gcn}(W, A^*)\\
			=& \mathop{\arg\min}_W \frac{1}{m} \sum_{v_a: a \leq m} J(\hat y_a^{gcn}, y_a),
		\end{split}
		\end{equation}
		
		\begin{figure*}
			\centering
			\captionsetup[subfigure]{justification=centering}
			\begin{subfigure}[b]{0.215\textwidth}
   				\includegraphics[width=\textwidth]{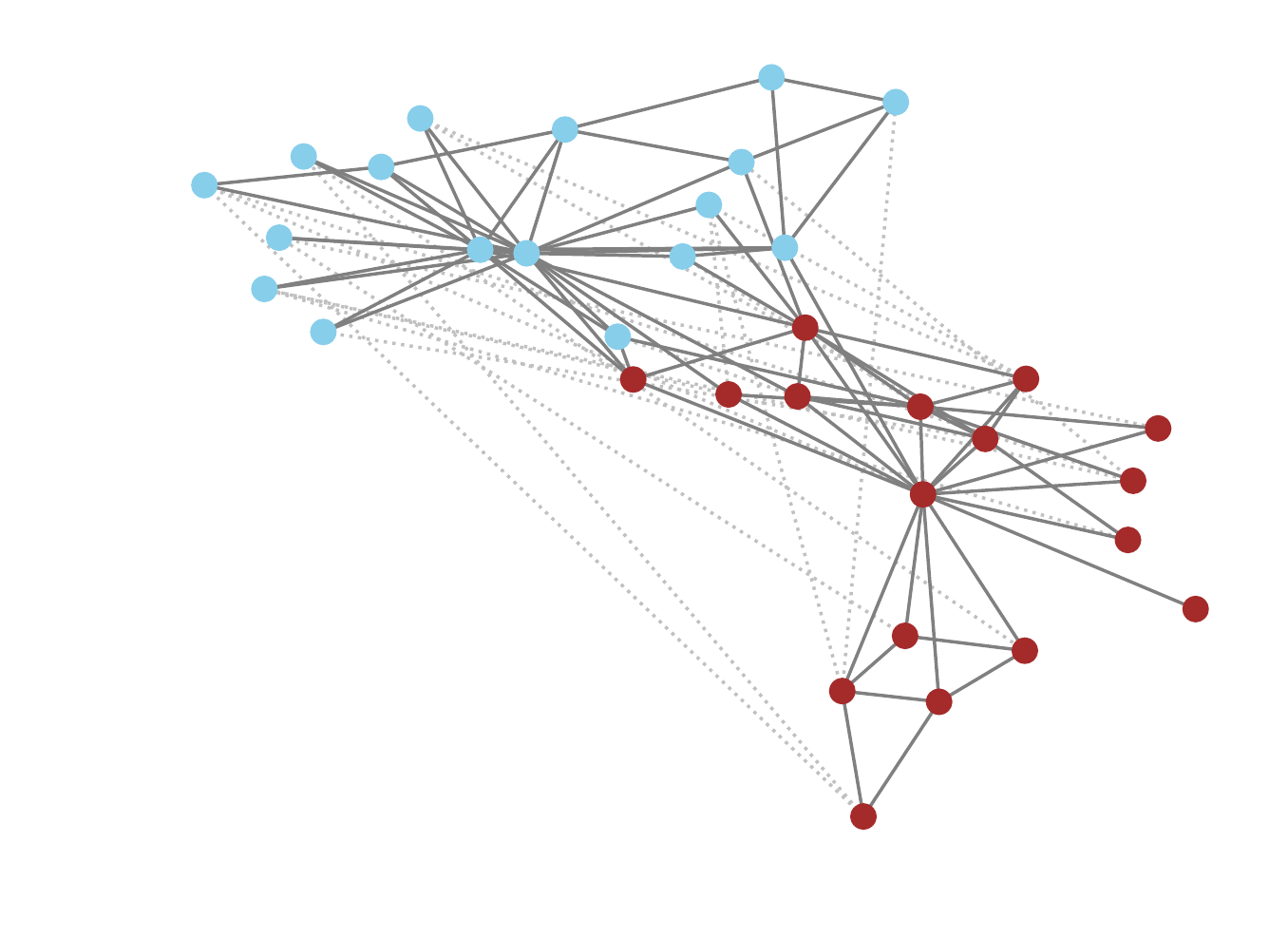}
   				\caption{Karate club network\\with noisy edges}
   				\label{fig:karate_1}
			\end{subfigure}
			\hfill
			\begin{subfigure}[b]{0.19\textwidth}
   				\includegraphics[width=\textwidth]{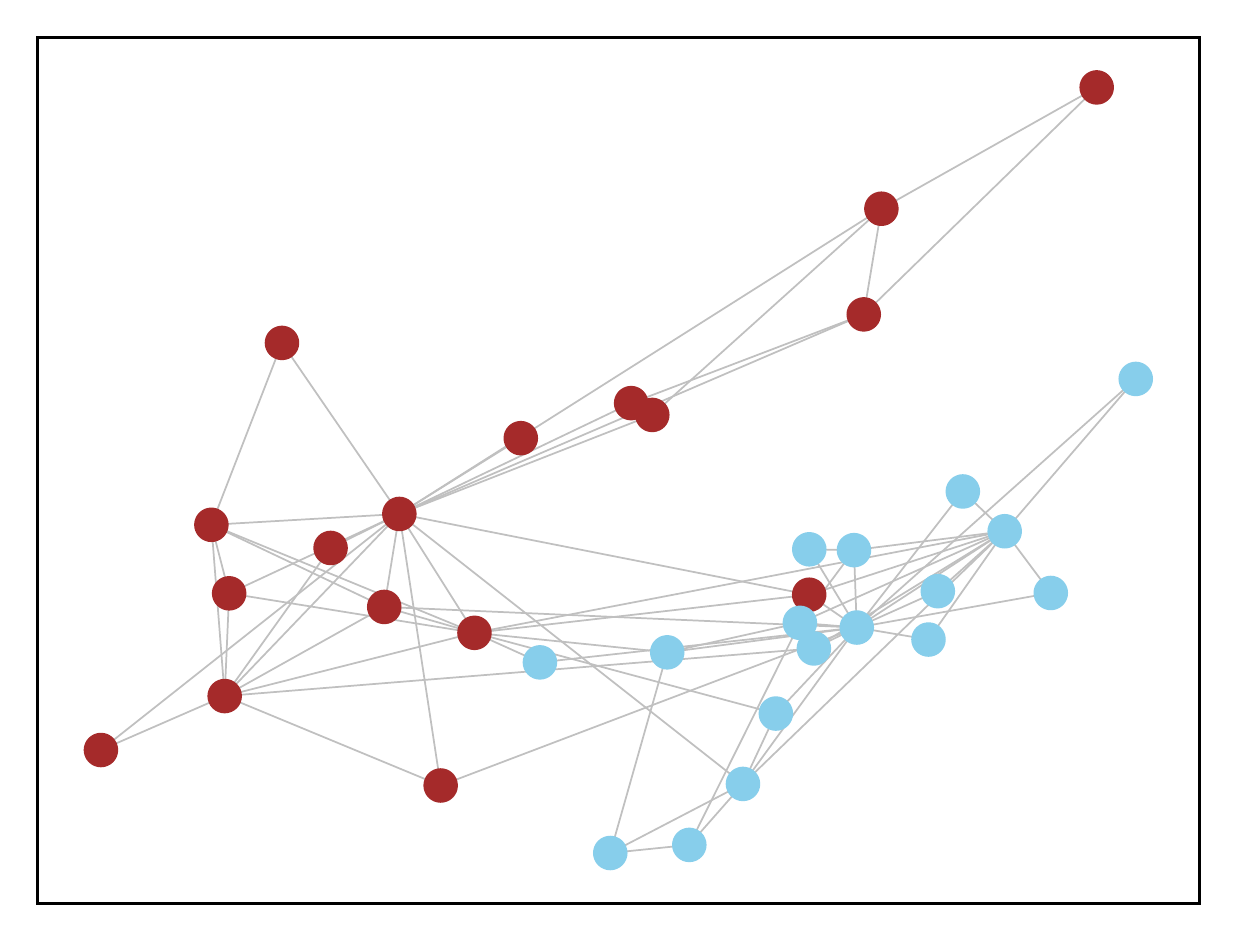}
   				\caption{GCN on the\\original network}
   				\label{fig:karate_2}
			\end{subfigure}
			\hfill
			\begin{subfigure}[b]{0.19\textwidth}
				\includegraphics[width=\textwidth]{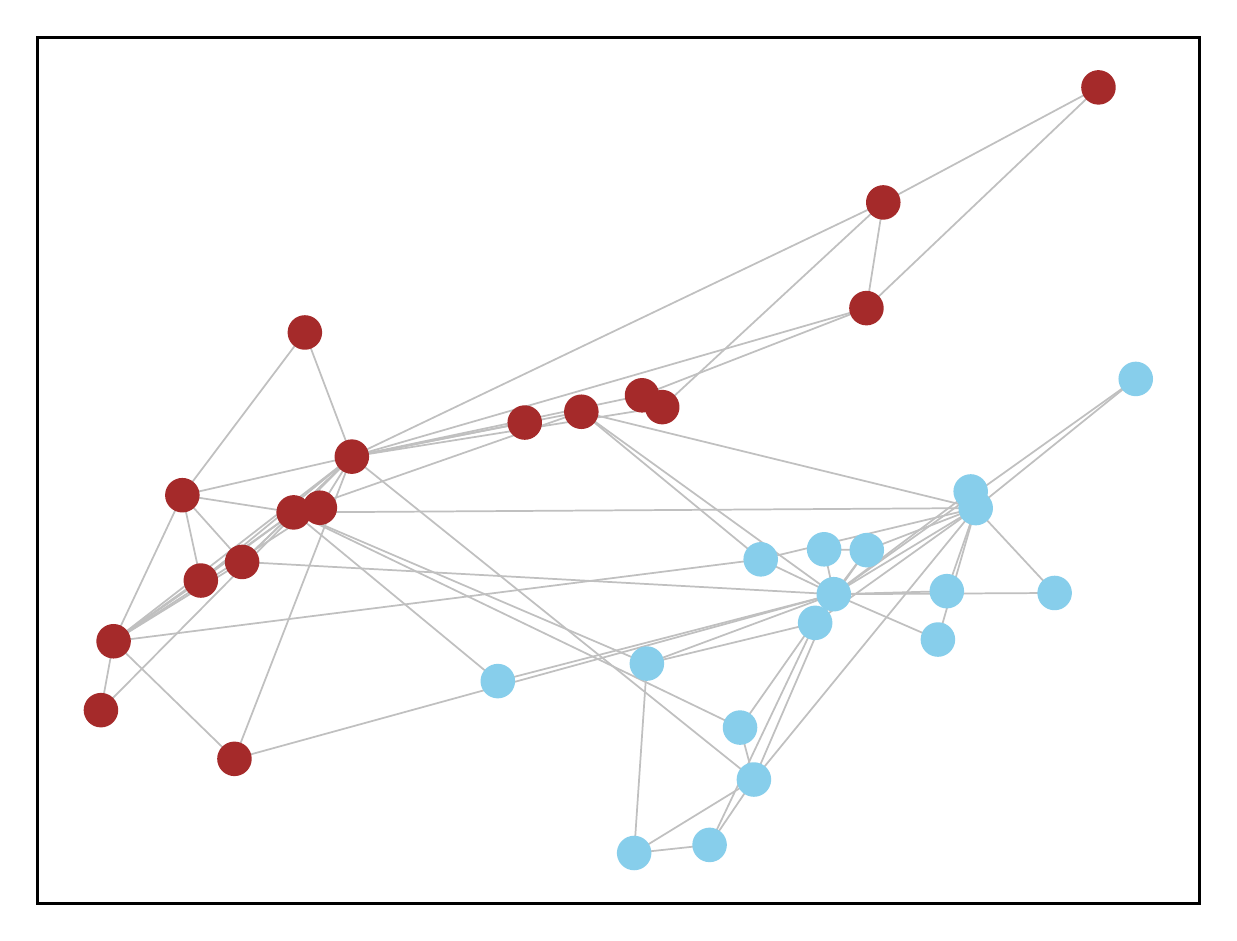}
				\caption{GCN-LPA on the\\original network}
				\label{fig:karate_3}
			\end{subfigure}
			\hfill
			\begin{subfigure}[b]{0.19\textwidth}
   				\includegraphics[width=\textwidth]{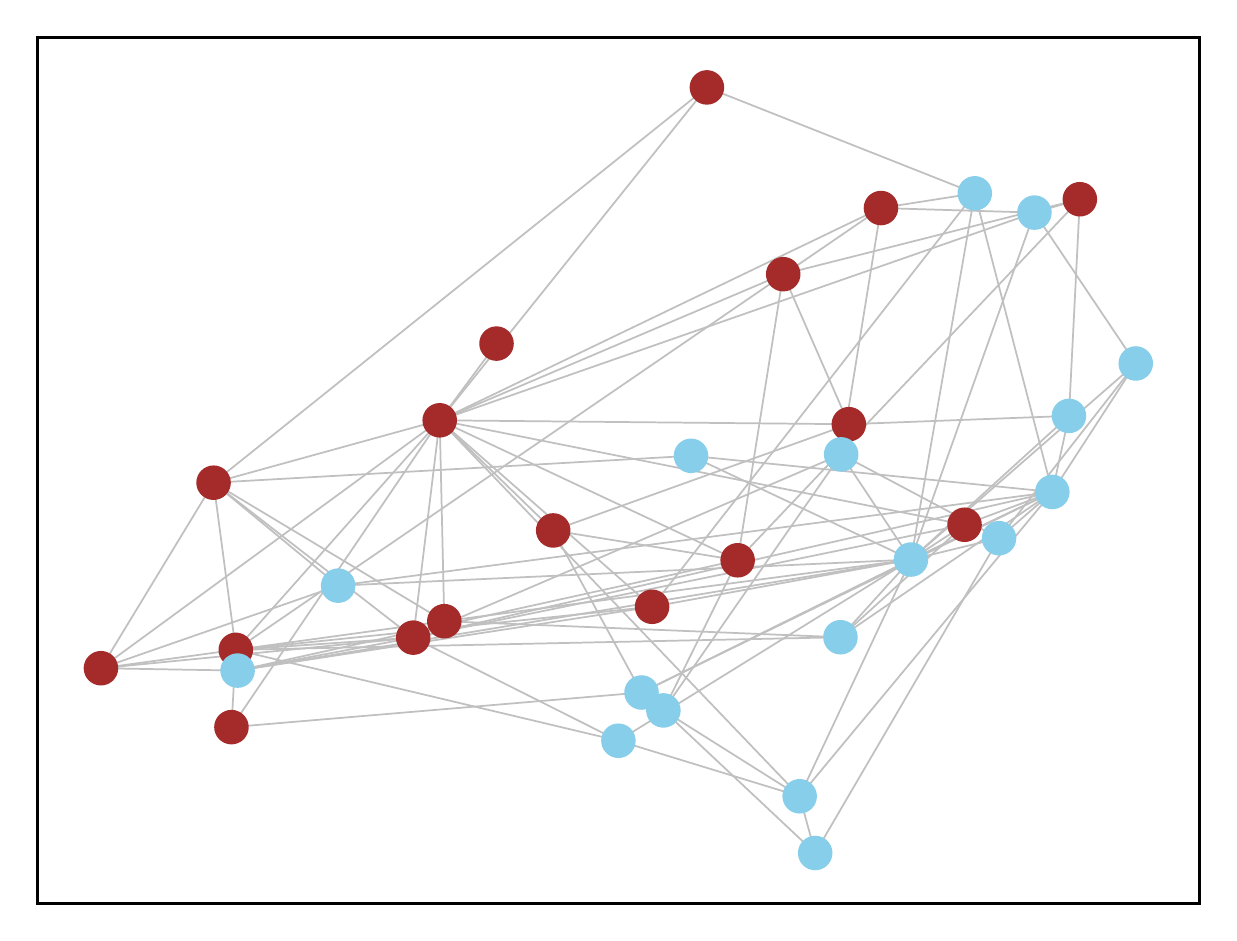}
   				\caption{GCN on the\\noisy network}
   				\label{fig:karate_4}
			\end{subfigure}
			\hfill
			\begin{subfigure}[b]{0.19\textwidth}
				\includegraphics[width=\textwidth]{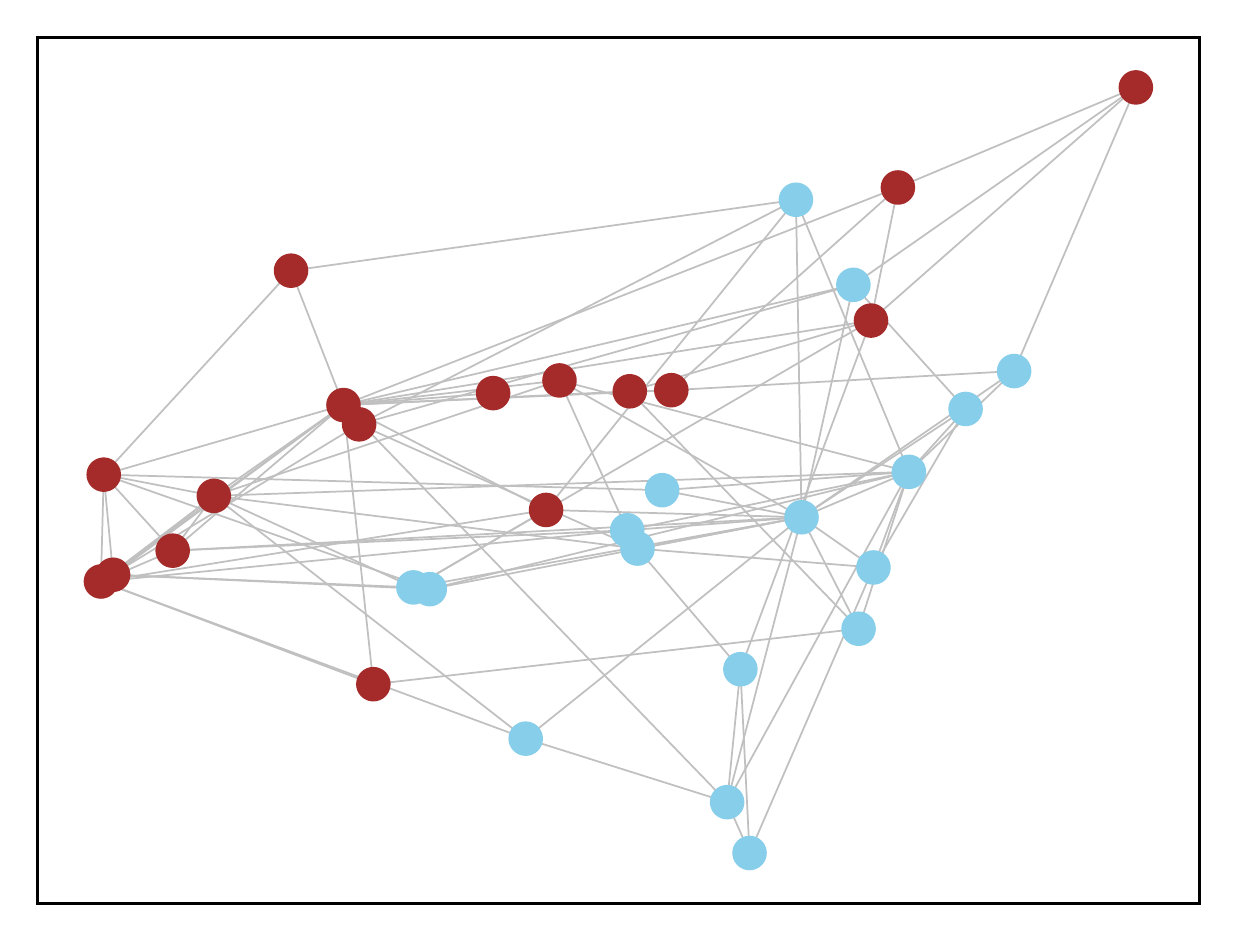}
				\caption{GCN-LPA on the\\noisy network}
				\label{fig:karate_5}
			\end{subfigure}
			\caption{Node embeddings of Zachary's karate club network trained on a node classification task (red vs. blue). Figure \ref{fig:karate_1} visualizes the graph. Node coordinates in Figure \ref{fig:karate_2}-\ref{fig:karate_5} are the embedding coordinates. Notice that GCN does not produce linearly separable embeddings (Figure \ref{fig:karate_2} vs. Figure \ref{fig:karate_3}), while GCN-LPA performs much better even in the presence of noisy edges (Figure \ref{fig:karate_4} vs. Figure \ref{fig:karate_5}).
			Additional visualizations are included in Appendix \ref{app:e}.}
			\label{fig:karate}
		\end{figure*}
		
		In practice, it is generally better to combine the above two steps together and train the whole model in an end-to-end fashion:
		\begin{equation}
			W^*, A^* = \mathop{\arg\min}_{W, A} \ L_{gcn}(W, A) + \lambda L_{lpa}(A),
		\end{equation}
		where $\lambda$ is the balancing hyper-parameter.
		In this way, $L_{lpa}(A)$ serves as a regularization term that assists the learning of edge weights $A$, since it is hard for the GCN to learn both $W$ and $A$ simultaneously due to overfitting.
		The proposed GCN-LPA approach can also be seen as learning the importance of edges that can be used to reconstruct node labels accurately by LPA, then transferring this knowledge from label space to feature space for the GCN.
		From this perspective, GCN-LPA also connects to Theorem \ref{thm:smoothing} except that the knowledge transfer is in the other direction.
		
		It is also worth noticing how the optimal $A^*$ is configured.
		The principle here is that we do not modify the basic structure of the original graph (i.e., not adding or removing edges) but only adjusting weights of existing edges.
		This is equivalent to learning a positive mask matrix $M$ for the adjacency matrix $A$ and taking the Hadamard product $M \circ A = A^*$.
		Each element $M_{ij}$ can be set as either a free variable or a function of the nodes at edge endpoints, for example, $M_{ij} = \log \left( \exp({\bf x}_i^\top {\bf H} {\bf x}_j) + 1 \right)$ where $\bf H$ is a learnable kernel matrix for measuring feature similarity.

	\subsection{Analysis of GCN-LPA Model Behavior}
	\label{sec:discussion}
		In this subsection, we show benefits of our unified model compared with GCN by analyzing properties of embeddings produced by the two models.
    	We first analyze the update rule of GCN for node $v_i$:
    	\begin{equation}
    		{\bf x}_i^{(k+1)} = \sigma \left( \sum_{v_j \in \mathcal N(v_i)} \tilde a_{ij} {\bf x}_j^{(k)} W^{(k)} \right), 
    	\end{equation}
    	where $\tilde a_{ij} = a_{ij} / d_{ii}$ is the normalized weight of edge $(j, i)$.
		This formula can be decomposed into the following two steps:
		
		(1) In \textit{aggregation} step, we calculate the aggregated representation ${\bf h}_i^{(k)}$ of all neighborhoods $\mathcal N(v_i)$:
		\begin{equation}
			{\bf h}_i^{(k)} = \sum_{v_j \in \mathcal N(v_i)} \tilde a_{ij} {\bf x}_j^{(k)}.
		\end{equation}
		(2) In \textit{transformation} step, the aggregated representation ${\bf h}_i^{(k)}$ is mapped to a new space by a transformation matrix and nonlinear function:
		\begin{equation}
			{\bf x}_i^{(k+1)} = \sigma \big( {\bf h}_i^{(k)} W^{(k)} \big).
		\end{equation}
		
		We show by the following theorem that the aggregation step reduces the overall distance in the embedding space between the nodes that are connected in the graph:
		\begin{theorem}
		\label{thm:decrease}
			\rm\textbf{(Shrinking property in GCN)}
			Let $D({\bf x}) = \frac{1}{2} \sum_{v_i, v_j} \tilde a_{ij} \| {\bf x}_i - {\bf x}_j \|_2^2$ be a distance metric over node embeddings ${\bf x}$.
			Then we have
			\begin{equation*}
				D({\bf h}^{(k)}) \leq D({\bf x}^{(k)}).
			\end{equation*}
		\end{theorem}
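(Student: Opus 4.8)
The plan is to reduce the statement to a single feature coordinate and then to one matrix inequality. Because $\|{\bf x}_i-{\bf x}_j\|_2^2=\sum_c(x_{ic}-x_{jc})^2$, it suffices to treat scalar node values; write $P$ for the row-stochastic matrix with entries $P_{ij}=\tilde a_{ij}=a_{ij}/d_{ii}$, so the aggregated values are ${\bf h}=P{\bf x}$. Expanding the squares, $2D({\bf x})={\bf x}^\top B{\bf x}$ with $B=I+C-P-P^\top$, where $C=\mathrm{diag}(P^\top\mathbf 1)$ holds the column sums of $P$. Two facts are immediate: $B\succeq 0$, since $2D({\bf x})=\sum_{i,j}\tilde a_{ij}(x_i-x_j)^2$ is a nonnegative sum; and $B\mathbf 1=0$, since every row of $P$ sums to one (this is just the translation invariance of $D$). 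As $2D({\bf h})={\bf x}^\top P^\top BP{\bf x}$, the theorem is equivalent to the operator inequality $B-P^\top BP\succeq 0$.

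The heuristic behind this inequality is convexity of $\|\cdot\|_2^2$: each ${\bf h}_i=\sum_j\tilde a_{ij}{\bf x}_j$ is a convex combination of neighbor values, so for an edge $(i,j)$ one may write ${\bf h}_i-{\bf h}_j=\sum_{p,q}\tilde a_{ip}\tilde a_{jq}({\bf x}_p-{\bf x}_q)$ and apply Jensen, obtaining $\|{\bf h}_i-{\bf h}_j\|_2^2\le\sum_{p,q}\tilde a_{ip}\tilde a_{jq}\|{\bf x}_p-{\bf x}_q\|_2^2$; summing this against $\tilde a_{ij}$ bounds $D({\bf h})$ by a ``two-step averaged'' version of the energy, which is then to be bounded by $D({\bf x})$. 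To turn this into a proof I would first discard the kernel direction — both $B$ and $B-P^\top BP$ kill $\mathbf 1$ — and then verify positive semidefiniteness on $\mathbf 1^\perp$. For a regular (undirected) graph this is transparent: $P$ is symmetric and doubly stochastic, $C=I$, $B=2(I-P)$, and in the eigenbasis of $P$ (eigenvalues $\mu\in[-1,1]$) the claim reduces coordinatewise to $(1-\mu)\mu^2\le 1-\mu$, i.e.\ $(1-\mu)^2(1+\mu)\ge 0$, which holds.

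The non-regular case is where the difficulty sits, because the diagonal term $C$ no longer cancels. The natural tool is the variance identity $2D({\bf x})=\sum_i\|{\bf x}_i-{\bf h}_i\|_2^2+\sum_i\big(\sum_j\tilde a_{ij}\|{\bf x}_j\|_2^2-\|{\bf h}_i\|_2^2\big)$, obtained by completing the square inside each neighborhood sum; applying it to both ${\bf x}$ and ${\bf h}=P{\bf x}$ and expanding $B-P^\top BP$ term by term, the goal is to rewrite $B-P^\top BP$ as an explicit sum of Laplacian-type (hence PSD) matrices built from $P$. I expect exactly this bookkeeping — exhibiting $B-P^\top BP$ as a manifestly nonnegative quadratic form for an arbitrary row-stochastic $P$ — to be the main obstacle; with it in hand, Theorem~\ref{thm:decrease} follows, and tracing the equality cases shows $D({\bf h}^{(k)})=D({\bf x}^{(k)})$ only in degenerate situations (e.g.\ ${\bf x}^{(k)}$ already constant on each connected component, so that aggregation changes nothing).
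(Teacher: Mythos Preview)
Your reduction to scalar coordinates and the reformulation as the operator inequality $B-P^\top BP\succeq 0$ are correct, and the eigenvalue argument for the regular (symmetric, doubly stochastic) case is clean. But the proposal has a genuine gap: you explicitly leave the non-regular case unfinished, calling it ``bookkeeping'' and ``the main obstacle'' without carrying it out. The Jensen-type heuristic you sketch---bounding $\|{\bf h}_i-{\bf h}_j\|_2^2$ by a convex combination of pairwise distances---produces a two-step averaged energy, but you never show that this is bounded above by $D({\bf x})$, and there is no argument that $B-P^\top BP$ decomposes into an explicit sum of PSD pieces for an arbitrary row-stochastic $P$ with $C\neq I$. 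As written, the proof covers only the regular case.

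The paper bypasses this difficulty with a different idea: it interprets the aggregation step as one step of gradient descent on $D$ with unit step size, i.e., ${\bf h}^{(k)}={\bf x}^{(k)}-\nabla D({\bf x}^{(k)})$. Since $D$ is quadratic, the second-order Taylor expansion is exact, giving
\[
D({\bf h}^{(k)})=D({\bf x}^{(k)})-\|\nabla D({\bf x}^{(k)})\|_2^2+\tfrac{1}{2}\,\nabla D({\bf x}^{(k)})^\top\,\nabla^2 D({\bf x})\,\nabla D({\bf x}^{(k)}).
\]
The paper then bounds the Hessian by $\nabla^2 D({\bf x})=I-D^{-1}A\preceq 2I$, using that $D^{-1}A$ is a Markov matrix with spectrum in $[-1,1]$; substituting this bound collapses the last two terms and yields $D({\bf h}^{(k)})\le D({\bf x}^{(k)})$ directly, with no case split between regular and non-regular graphs. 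This descent-lemma viewpoint is the missing ingredient in your plan; it replaces the attempted PSD decomposition of $B-P^\top BP$ with a one-line smoothness estimate.
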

		
		\begin{table*}[t]
			\centering
			\setlength{\tabcolsep}{12pt}
			\begin{tabular}{c|ccccc}
				\hline
				& \textbf{Cora} & \textbf{Citeseer} & \textbf{Pubmed} & \textbf{Coauthor-CS} & \textbf{Coauthor-Phy} \\
				\hline
				\# nodes & 2,708 & 3,327 & 19,717 & 18,333 & 34,493 \\
				\# edges & 5,278 & 4,552 & 44,324 & 81,894 & 247,962 \\
				\# features & 1,433 & 3,703 & 500 & 6,805 & 8,415 \\
				\# classes & 7 & 6 & 3 & 15 & 5 \\
				Intra-class edge rate & 81.0\% & 73.6\% & 80.2\% & 80.8\% & 93.1\% \\
				\hline
			\end{tabular}
			\caption{Dataset statistics after removing self-loops and duplicate edges.}
			\label{table:statistics}
		\end{table*}
	
		Proof of Theorem \ref{thm:decrease} is in Appendix \ref{app:d}.
		Theorem \ref{thm:decrease} indicates that the overall distance among connected nodes is reduced after taking one aggregation step, which implies that connected components in the graph ``shrink'' and nodes within each connected component get closer to each other in the embedding space.
		In an ideal case where edges only connect nodes with the same label, the aggregation step will push nodes within the same class together, which greatly benefits the transformation step that acts like a hyperplane $W^{(k)}$ for classification.
		However, two connected nodes may have different labels.
		These ``noisy'' edges will impede the formation of clusters and make the inter-class boundary less clear.
	
		Fortunately, in GCN-LPA, edge weights are learned by minimizing the difference between ground-truth labels and labels reconstructed from multi-hop neighbors.
		This will force the model to increase weight/bandwidth of possible paths that connect nodes with the same label, so that labels can ``flow'' easily along these paths for the purpose of label reconstruction.
		In this way, GCN-LPA is able to identify potential intra-class edges and increase their weights to assist learning clustering structures.
		Figure \ref{fig:model} gives a toy example illustrating how our model works intuitively.
		
		To empirically justify our claim, we apply a two-layer untrained GCN with randomly initialized transformation matrices to the well-known Zachary's karate club network \citep{zachary1977information} as shown in Figure \ref{fig:karate_1}, which contains 34 nodes of 2 classes and 78 unweighted edges (grey solid lines).
		We then increase the weights of intra-class edges by ten times to simulate GCN-LPA.
		We find that GCN works well on this network (Figure \ref{fig:karate_2}), but GCN-LPA performs even better than GCN because the node embeddings are completely linearly separable as shown in Figure \ref{fig:karate_3}.
		To further justify our claim, we randomly add 20 ``noisy'' inter-class edges (grey dotted lines) to the original network, from which we observe that GCN is misled by noise and mixes nodes of two classes together (Figure \ref{fig:karate_4}), but GCN-LPA still distinguishes the two clusters (Figure \ref{fig:karate_5}) because it is better at ``denoising'' undesirable edges based on the supervised signal of labels.

\section{Connection to Existing Work}
\label{sec:existing_work}
	Edge weights play a key role in graph-based node classification as well as representation learning.
	In this section, we discuss three lines of related work that learn edge weights adaptively.
	
	\subsection{Locally Linear Embedding}
		Locally linear embedding (LLE) \citep{roweis2000nonlinear} and its variants \citep{zhang2007mlle, kong2012iterative} learn edge weights by constructing a linear dependency between a node and its neighbors, then use the learned edge weights to embed high-dimensional nodes into a low-dimensional space.
		Our work is similar to LLE in the aspect of transferring the knowledge of edge importance from one space to another, but the difference is that LLE is an unsupervised dimension reduction method that learns the graph structure based on local proximity only, while our work is semi-supervised and explores high-order relationship among nodes.
	
	\subsection{Label Propagation Algorithm}
		Classical LPA \citep{zhu2005semi,zhou2004learning} can only make use of node labels rather than node features.
		In contrast, adaptive LPA considers node features by making edge weights learnable.
		Typical techniques of learning edge weights include adopting kernel functions \citep{zhu2003semi,liu2018learning} (e.g., $a_{ij} = \exp ( -\sum_d (x_{id} - x_{jd})^2 / \sigma^2_d )$ where $d$ is dimensionality of features), minimizing neighborhood reconstruction error \citep{wang2008label,karasuyama2013manifold}, using leave-one-out loss \citep{zhang2007hyperparameter}, or imposing sparseness on edge weights \citep{hong2009sparsity}.
		However, in these LPA variants, node features are only used to assist learning the graph structure rather than explicitly mapped to node labels, which limits their capability in node classification.
		Another notable difference is that adaptive LPA learns edge weights by introducing the regularizations above, while our work takes LPA itself as regularization to learn edge weights.
	
	\subsection{Attention Mechanism on Graphs}
	Our method is also conceptually connected to attention mechanism on graphs \citep{velivckovic2018graph,thekumparampil2018attention,zhang2018gaan,liu2019geniepath}, in which an attention weight $\alpha_{ij}$ is learned between node $v_i$ and $v_j$.
	For example, $\alpha_{ij} = \text{LeakyReLU}({\bm a}^\top [W{\bf x}_i || W{\bf x}_j])$ in GAT \citep{velivckovic2018graph}, $\alpha_{ij} = a \cdot \cos(W {\bf x}_i, W {\bf x}_j)$ in AGNN \citep{thekumparampil2018attention}, $\alpha_{ij} = (W_1 {\bf x}_i)^\top W_2 {\bf x}_j$ in GaAN \citep{zhang2018gaan}, and $\alpha_{ij} = {\bm a}^\top \tanh(W_1 {\bf x}_i + W_2 {\bf x}_j)$ in GeniePath \citep{liu2019geniepath}, where $a$ and $W$ are trainable variables.
	A significant difference between these attention mechanisms and our work is that attention weights are learned based merely on feature similarity, while we propose that edge weights should be consistent with the distribution of labels on the graph, which requires less handcrafting of the attention function and is more task-oriented.
	Nevertheless, all the above formulas for calculating attentions can also be used in our model as the implementation of edge weights.
	
		\begin{table*}[t]
			\centering
			\setlength{\tabcolsep}{12pt}
			\begin{tabular}{c|ccccc}
				\hline
				Method & \textbf{Cora} & \textbf{Citeseer} & \textbf{Pubmed} & \textbf{Coauthor-CS} & \textbf{Coauthor-Phy} \\
				\hline
				MLP & 64.6 $\pm$ 1.7 & 62.0 $\pm$ 1.8 & 85.9 $\pm$ 0.3 & 91.7 $\pm$ 1.4 & 94.1 $\pm$ 1.2 \\
				LR & 77.3 $\pm$ 1.8 & 71.2 $\pm$ 1.8 & 86.0 $\pm$ 0.6 & 91.1 $\pm$ 0.6 & 93.8 $\pm$ 1.1 \\
				\hline
				LPA & 85.3 $\pm$ 0.9 & 70.0 $\pm$ 1.7 & 82.6 $\pm$ 0.6 & 91.3 $\pm$ 0.2 & 94.9 $\pm$ 0.4  \\
				\hline
				GCN & 88.2 $\pm$ 0.8 & 77.3 $\pm$ 1.5 & 87.2 $\pm$ 0.4 & 93.6 $\pm$ 1.5 & 96.2 $\pm$ 0.2 \\
				GAT & 87.7 $\pm$ 0.3 & 76.2 $\pm$ 0.9 & 86.9 $\pm$ 0.5 & 93.8 $\pm$ 0.4 & 96.3 $\pm$ 0.7 \\
				JK-Net & \textbf{89.1} $\pm$ 1.2 & 78.3 $\pm$ 0.9 & 85.8 $\pm$ 1.1 & 92.4 $\pm$ 0.4 & 94.8 $\pm$ 0.4 \\
				GraphSAGE & 86.8 $\pm$ 1.9 & 75.2 $\pm$ 1.1 & 84.7 $\pm$ 1.6 & 92.6 $\pm$ 1.6 & 94.5 $\pm$ 1.1 \\
				\hline
				GCN-LPA & 88.5 $\pm$ 1.5 & \textbf{78.7} $\pm$ 0.6 & \textbf{87.8} $\pm$ 0.6 & \textbf{94.8} $\pm$ 0.4 & \textbf{96.9} $\pm$ 0.2 \\
				\hline
			\end{tabular}
			\caption{Mean and the $95\%$ confidence intervals of test set accuracy for all methods and datasets.}
			\label{table:random_split}
		\end{table*}

\section{Experiments}
	We evaluate our model and present its performance on five datasets including citation networks and coauthor networks.
	We also study the hyper-parameter sensitivity and provide training time analysis.

	\subsection{Datasets}
		We use the following five datasets in our experiments:
		
		\textbf{Citation networks}:
		We consider three citation network datasets \citep{sen2008collective}:
		Cora, Citeseer, and Pubmed.
		In these datasets, nodes correspond to documents, edges correspond to citation links, and each node has a sparse bag-of-words feature vector as well as a class label.
		
		\textbf{Coauthor networks}:
		We also use two co-authorship networks \citep{shchur2018pitfalls}, Coauthor-CS and Coauthor-Phy, based on Microsoft Academic Graph from the KDD Cup 2016 challenge.
		Here nodes are authors and an edge indicates that two authors co-authored a paper.
		Node features represent paper keywords for each author's papers, and class labels indicate most active fields of study for each author.
				
		Statistics of the five datasets are shown in Table \ref{table:statistics}.
		We also calculate the intra-class edge rate (the fraction of edges that connect two nodes within the same class), which is significantly higher than inter-class edge rate in all networks.
		The finding supports our claim in Section \ref{sec:discussion} that node classification benefits from intra-class edges in a graph.

	\subsection{Baselines}
		We compare against the following baselines in our experiments.
		The first two baselines only utilize node features, the third baseline only utilizes graph structure, while the rest of baselines are GNN-based methods utilizing both node features and graph structure as input.
		Hyper-parameters of baselines are set as default in Python packages or their open-source codes unless otherwise stated.
		\begin{itemize}
			\item
				\textbf{Multi-layer Perceptron (MLP)} and \textbf{Logistic Regression (LR)} are feature-based methods that do not consider the graph structure.
				We set solver=`lbfgs' for LR and hidden\_layer\_sizes=50 for MLP using Python sklearn package.
			\item
				\textbf{Label Propagation (LPA)} \citep{zhu2005semi}, on the other hand, only consider the graph structure and ignore node features.
				We set the iteration of LPA as 20 in our implementation.
			\item
				\textbf{Graph Convolutional Network (GCN)} \citep{kipf2017semi} proposes a first-order approximation to spectral graph convolutions.
			\item
				\textbf{Graph Attention Network (GAT)} \citep{velivckovic2018graph} propose an attention mechanism to treat neighbors differently in the aggregation step.
			\item
				\textbf{Jumping Knowledge Networks (JK-Net)} \citep{xu2018representation} leverages different neighborhood ranges for each node to enable structure-aware representation. We use concat as the aggregator for JK-Net.
			\item
				\textbf{Graph Sampling and Aggregation (GraphSAGE)} \citep{hamilton2017inductive} is a mini-batch implementation of GCN that uses neighborhood sampling strategy and different aggregation schemes. We use mean as the aggregator for GraphSAGE.
		\end{itemize}
		
		\begin{figure*}[t]
		    \begin{minipage}[t]{0.3\linewidth}
		        \centering 
    			\includegraphics[width=\textwidth]{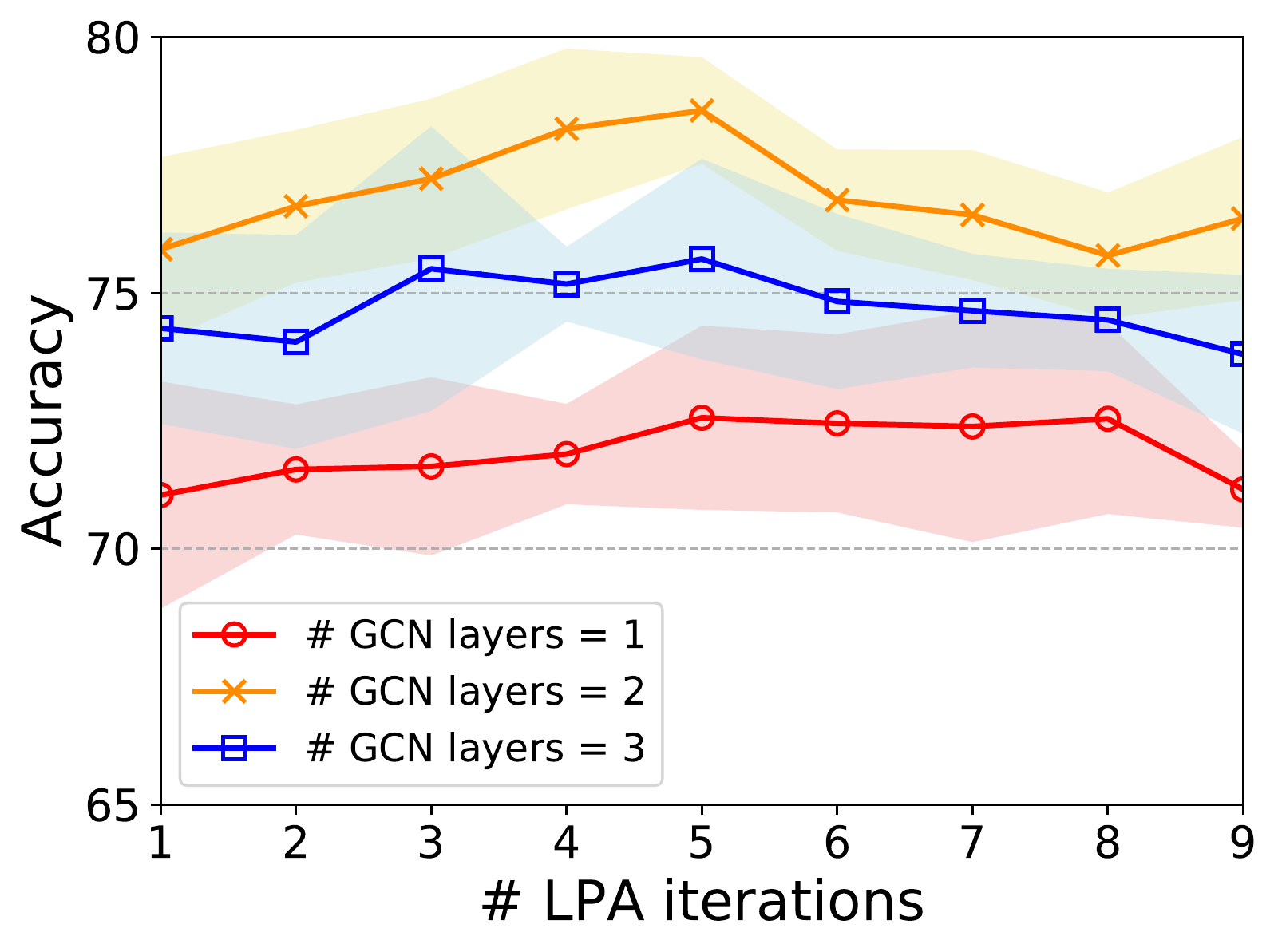}
    			\caption{Sensitivity to \# LPA iterations on Citeseer dataset.} 
    			\label{fig:ps_lpa_iter}
  			\end{minipage}
  			\hfill
		    \begin{minipage}[t]{0.3\linewidth} 
    			\centering 
    			\includegraphics[width=\textwidth]{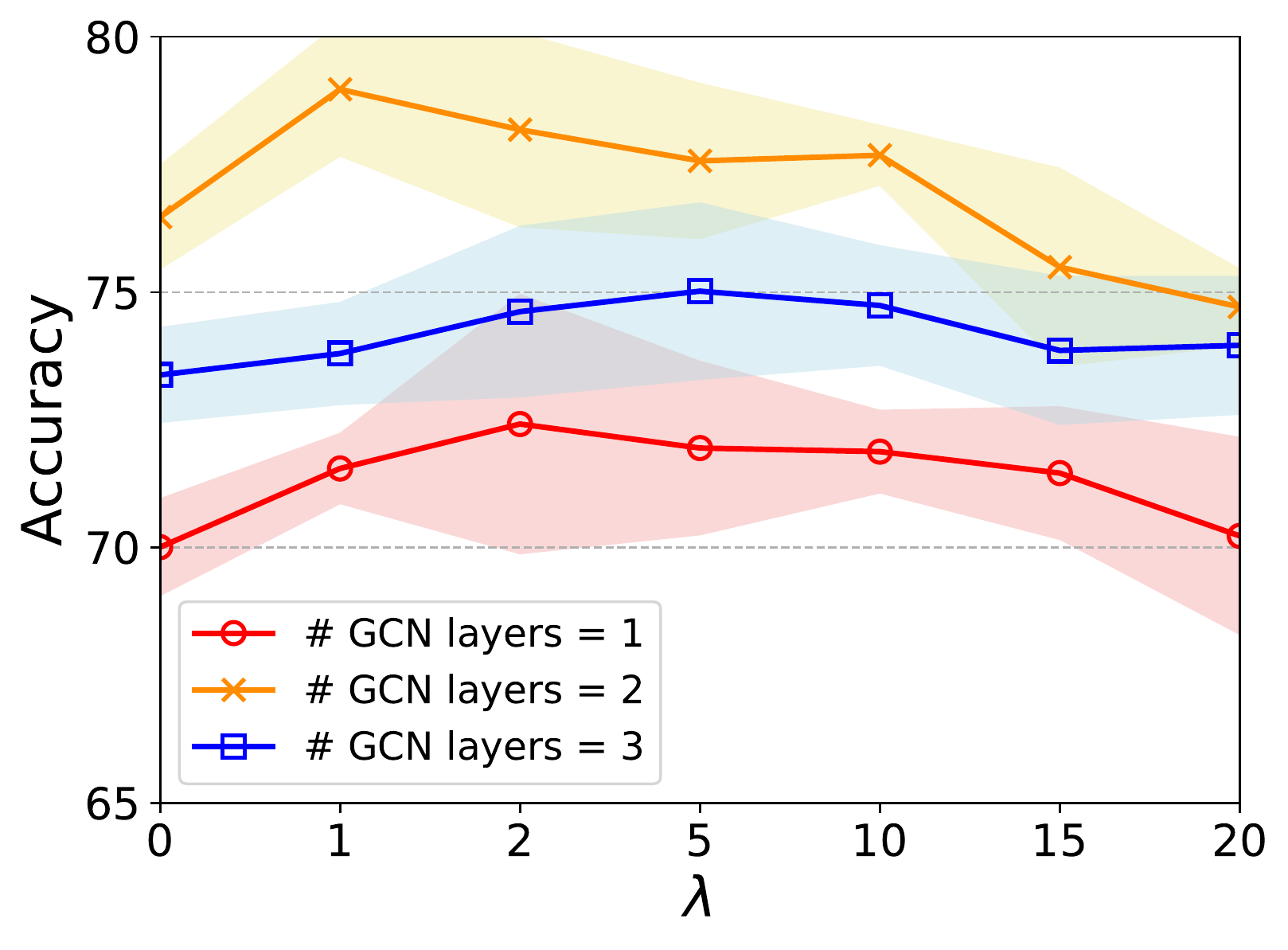}
    			\caption{Sensitivity to $\lambda$ on Citeseer dataset.} 
    			\label{fig:ps_lambda}
  			\end{minipage}
  			\hfill
  			\begin{minipage}[t]{0.3\linewidth} 
    			\centering 
    			\includegraphics[width=\textwidth]{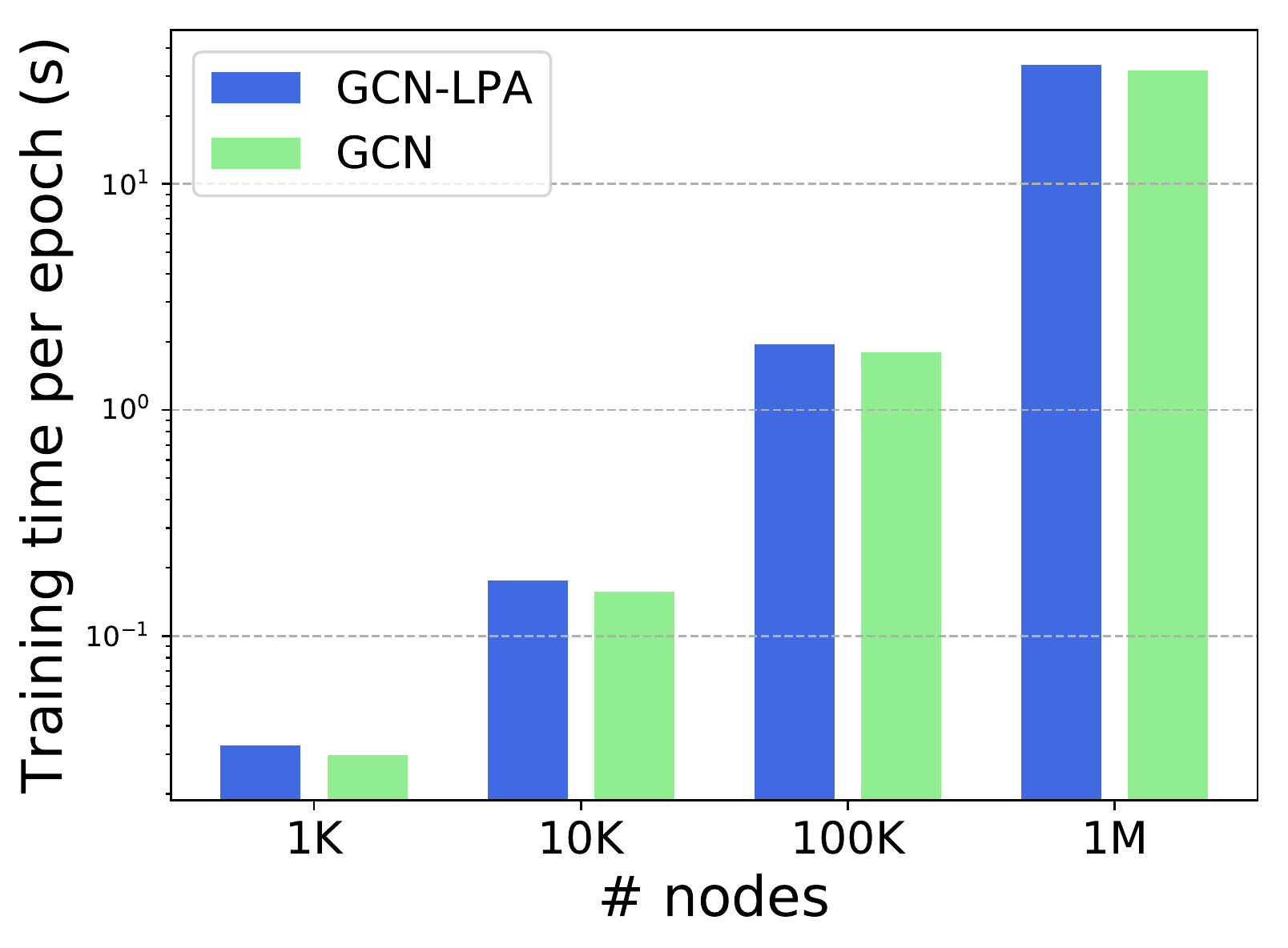}
    			\caption{Training time per epoch on random graphs.} 
    			\label{fig:training_time}
  			\end{minipage}
		\end{figure*}
		
		\begin{table*}[t]
			\centering
			\setlength{\tabcolsep}{8pt}
			\begin{tabular}{c|cccccc}
				\hline
				Ratio of labeled nodes & 0\% & 20\% & 40\% & 60\% & 80\% & 100\% \\
				\hline
				Accuracy & 75.8 $\pm$ 1.0 & 76.3 $\pm$ 1.1 & 76.7 $\pm$ 0.8 & 77.3 $\pm$ 0.7 & 78.1 $\pm$ 0.6 & 78.7 $\pm$ 0.6 \\
				\hline
			\end{tabular}
			\caption{Result of GCN-LPA on Citeseer dataset with differet ratio of labeled nodes in LPA.}
			\label{table:ratio}
		\end{table*}

	\subsection{Experimental Setup}
		Our experiments focus on the transductive setting where we only know labels of part of nodes but have access to the entire graph as well as features of all nodes.\footnote{The experimental setting here is the same as GCN \citep{kipf2017semi}. But note that our method can be easily generalized to inductive case if implemented in a way similar to GraphSAGE \citep{hamilton2017inductive}.}
		The ratio of training, validation, and test set are set as $6:2:2$.
		The weight of each edge is treated as a free variable during training.
		We train our model for 200 epochs using Adam \citep{kingma2015adam} and report the test set accuracy when validation set accuracy is maximized.
		Each experiment is repeated three times and we report the mean and the $95\%$ confidence interval.
		We initialize weights according to \cite{glorot2010understanding} and row-normalize input features.
		During training, we apply L2 regularization to the transformation matrices and use the dropout technique \citep{nitish2014dropout}.
		The settings of all other hyper-parameters can be found in Appendix \ref{app:f}.

	\subsection{Results}
		The results of node classification are summarized in Table \ref{table:random_split}.
		Table \ref{table:random_split} indicates that only using node features (MLP, LR) or graph structure (LPA) will lead to information loss and cannot fully exploit datasets in general.
		The results demonstrate that our proposed GCN-LPA model surpasses state-of-the-art GCN/GNN baselines.
		We note that JK-Net is a strong baseline on Cora, but it does not perform consistently well on other datasets.
		
		We investigate the influence of the number of LPA iterations and the training weight of LPA loss term $\lambda$ on the performance of classification.
		The results on Citeseer dataset are plotted in Figures \ref{fig:ps_lpa_iter} and \ref{fig:ps_lambda}, respectively, where each line corresponds to a given number of GCN layers in GCN-LPA.
		From Figure \ref{fig:ps_lpa_iter} we observe that the performance is boosted at first when the number of LPA iterations increases, then the accuracy stops increasing and decreases since a large number of LPA iterations will include more noisy nodes.
		Figure \ref{fig:ps_lambda} shows that training without the LPA loss term (i.e., $\lambda=0$) is more difficult than the case where $\lambda = 1 \sim 5$, which justifies our aforementioned claim that it is hard for the GCN part to learn both transformation matrices $W$ and edge weights $A$ simultaneously without the assistance of LPA regularization.
		
		To further show how much the LPA impacts the performance, we vary the ratio of labeled nodes in LPA from $100\%$ to $0\%$ during training, and report the result of acuracy on Citeseer dataset in Table \ref{table:ratio}.
		From Table \ref{table:ratio} we observe that the performance of GCN-LPA gets worse when the ratio of labeled nodes in LPA decreases.
		In addition, using more labeled nodes in LPA also helps improve the model stability.
		Note that a ratio of $0\%$ does not mean that GCN-LPA is equivalent to GCN \citep{kipf2017semi} because the edge weights in GCN-LPA is still trainable, which increases the risk of overfitting the training data.
		
		We study the training time of GCN-LPA on random graphs.
		We use the one-hot identity vector as feature and 0 as label for each node.
		The size of training set and validation set is 100 and 200, respectively, while the rest is test set.
		The average number of neighbors for each node is set as 5, and the number of nodes is varied from one thousand to one million.
		We run GCN-LPA and GCN for 100 epochs on a Microsoft Azure virtual machine with 1 NVIDIA Tesla M60 GPU, 12 Intel Xeon CPUs (E5-2690 v3 @2.60GHz), and 128GB of RAM, using the same hyper-parameter setting as in Cora.
		The training time per epoch of GCN-LPA and GCN is presented in Figure \ref{fig:training_time}.
		Our result shows that GCN-LPA requires only $9.2\%$ extra training time on average compared to GCN.

\section{Conclusion and Future Work}
	We studied the theoretical relationship between two types of well-known graph-based models for node classification, Label Propagation Algorithm and Graph Convolutional Neural Networks, from the perspectives of feature/label smoothing and feature/label influence.
	We then propose a unified model GCN-LPA, which learns transformation matrices and edge weights simultaneously in GCN with the assistance of LPA regularizer.
	We also analyze why our unified model performs better than traditional GCN in node classification.
	Experiments on five datasets demonstrate that our model outperforms state-of-the-art baselines, and it is also highly time-efficient with respect to the size of a graph.
	
	We point out two avenues of possible directions for future work.
	First, our proposed model focuses on transductive setting where all node features and the entire graph structure are given.
	An interesting problem is how the model performs in inductive setting where we have no access to test nodes during training.
	Second, the question of how to generalize the idea of our model to GNNs with different aggregation functions (e.g., concatenation or max-pooling) is also a promising direction.

\bibliography{reference}
\bibliographystyle{icml2020}

\clearpage
\renewcommand\thesubsection{\Alph{subsection}}
\onecolumn

\section*{Appendix}
	\subsection{Proof of Theorem \ref{thm:smoothing}}
	\label{app:a}
		\begin{proof}
			Denote $\tilde a_{ij} = a_{ij} / d_{ii}$ as the normalized weight of edge $(j, i)$.
			It is clear that $\sum_{j \in \mathcal N(i)} \tilde a_{ij} = 1$.
			Given that $\mathcal M$ is differentiable, we perform a first-order Taylor expansion with Peano's form of remainder at ${\bf x}_i$ for $\sum_{j \in \mathcal N(i)} \tilde a_{ij} y_j$:
			\begin{equation}
			\begin{split}
				& \sum_{j \in \mathcal N(i)} \tilde a_{ij} y_j = \sum_{j \in \mathcal N(i)} \tilde a_{ij} \mathcal M({\bf x}_j) \\
				= \ & \sum_{j \in \mathcal N(i)} \tilde a_{ij} \left( \mathcal M({\bf x}_i) + \frac{\partial \mathcal M({\bf x}_i)}{\partial {\bf x}^\top} ({\bf x}_j - {\bf x}_i) + o(\| {\bf x}_j - {\bf x}_i \|_2) \right) \\
				= \ & \mathcal M({\bf x}_i) + \frac{\partial \mathcal M({\bf x}_i)}{\partial {\bf x}^\top} \sum_{j \in \mathcal N(i)} \tilde a_{ij} ({\bf x}_j - {\bf x}_i) + \sum_{j \in \mathcal N(i)} \tilde a_{ij} o(\| {\bf x}_j - {\bf x}_i \|_2) \\
				= \ & y_i - \frac{\partial \mathcal M({\bf x}_i)}{\partial {\bf x}^\top} \epsilon_i + \sum_{j \in \mathcal N(i)} \tilde a_{ij} o(\| {\bf x}_j - {\bf x}_i \|_2).
			\end{split}
			\end{equation}
			According to Cauchy-Schwarz inequality and $L$-Lipschitz property, we have
			\begin{equation}
				\bigg| \frac{\partial \mathcal M({\bf x}_i)}{\partial {\bf x}^\top} \epsilon_i \bigg| \ \leq \ \bigg\| \frac{\partial \mathcal M({\bf x}_i)}{\partial {\bf x}^\top} \bigg\|_2 \| \epsilon_i \|_2 \ \leq \ L \| \epsilon_i \|_2.
			\end{equation}
			Therefore, the approximation of $y_i$ is bounded by
			\begin{equation}
			\begin{split}
				& \bigg| y_i - \sum_{j \in \mathcal N(i)} \tilde a_{ij} y_j \bigg| \\
				= \ & \bigg| \frac{\partial \mathcal M({\bf x}_i)}{\partial {\bf x}^\top} \epsilon_i - \sum_{j \in \mathcal N(i)} \tilde a_{ij} o(\| {\bf x}_j - {\bf x}_i \|_2) \bigg | \\
				\leq \ & \bigg| \frac{\partial \mathcal M({\bf x}_i)}{\partial {\bf x}^\top} \epsilon_i \bigg| + \bigg| \sum_{j \in \mathcal N(i)} \tilde a_{ij} o(\| {\bf x}_j - {\bf x}_i \|_2) \bigg | \\
				\leq \ & L \| \epsilon_i\|_2 + o \big( \max_{j \in \mathcal N(i)} ( \| {\bf x}_j - {\bf x}_i \|_2 ) \big).
			\end{split}
			\end{equation}
		\end{proof}

	\subsection{Proof of Theorem \ref{thm:influence}}
	\label{app:b}
		Before proving Theorem \ref{thm:influence}, we first give two lemmas that demonstrate the exact form of feature influence and label influence defined in this paper.
		The relationship between feature influence and label influence can then be deduced from their exact forms.
		\begin{lemma}
		\label{lemma:1}
			Assume that the nonlinear activation function in GCN is ReLU.
			Let $\mathcal P_k^{a \rightarrow b}$ be a path $[v^{(k)}, v^{(k-1)},\cdots, v^{(0)}]$ of length $k$ from node $v_a$ to node $v_b$, where $v^{(k)} = v_a$, $v^{(0)} = v_b$, and $v^{(i-1)} \in \mathcal N(v^{(i)})$ for $i = k,\cdots, 1$.
			Then we have
			\begin{equation}
			\label{eq:appendix_0}
				 \tilde I_f(v_a, v_b; k) = \sum_{\mathcal P_k^{a \rightarrow b}} \prod_{i=k}^1 \tilde a_{v^{(i-1)}, v^{(i)}},
			\end{equation}
			where $\tilde a_{v^{(i-1)}, v^{(i)}}$ is the normalized weight of edge $(v^{(i)}, v^{(i-1)})$.
		\end{lemma}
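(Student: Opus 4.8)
The plan is to unroll the Jacobian $\partial {\bf x}_a^{(k)}/\partial {\bf x}_b$ through the $k$ layers by repeated use of the chain rule, group the resulting terms according to the path in the computation graph that links $v_b$ at layer $0$ to $v_a$ at layer $k$, pass to the expectation, and finally normalize. This follows the unrolling used by \cite{xu2018representation} and \cite{koh2017understanding}.

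First I would differentiate a single layer. Writing the $\ell$-th layer node-wise as ${\bf x}_i^{(\ell)} = \sigma\big(\sum_{v_j\in\mathcal N(v_i)}\tilde a_{ij}\,{\bf x}_j^{(\ell-1)}W^{(\ell-1)}\big)$ and using that $\sigma$ is ReLU, we obtain $\partial {\bf x}_i^{(\ell)}/\partial {\bf x}_j^{(\ell-1)} = \tilde a_{ij}\,W^{(\ell-1)}\Sigma_i^{(\ell)}$, where $\Sigma_i^{(\ell)}$ is the diagonal $0/1$ matrix recording which coordinates of the pre-activation of $v_i$ in layer $\ell$ are positive. Composing this $k$ times and summing over all intermediate nodes, $\partial {\bf x}_a^{(k)}/\partial {\bf x}_b$ becomes a sum over all length-$k$ paths $\mathcal P_k^{a\to b} = [v^{(k)},\dots,v^{(0)}]$ with $v^{(k)}=v_a$, $v^{(0)}=v_b$ and $v^{(i-1)}\in\mathcal N(v^{(i)})$, where the $\mathcal P$-th summand is the scalar $\prod_{i=k}^{1}\tilde a_{v^{(i-1)},v^{(i)}}$ times the matrix product of the factors $W^{(i-1)}\Sigma_{v^{(i)}}^{(i)}$ along that path.

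Next I would take expectations. Under the standard assumption (as in \cite{xu2018representation}) that the ReLU activation patterns $\{\Sigma_{v^{(i)}}^{(i)}\}$ behave like independent Bernoulli masks with a common activation rate $\rho$ and are independent of the transformation matrices, the expectation of the matrix factor along every length-$k$ path equals the same path-independent matrix $Z_k$ (explicitly, $\rho^{k}$ times the product of the layer matrices $W^{(0)},\dots,W^{(k-1)}$). Hence $\mathbb E\big[\partial {\bf x}_a^{(k)}/\partial {\bf x}_b\big] = \big(\sum_{\mathcal P_k^{a\to b}}\prod_{i=k}^{1}\tilde a_{v^{(i-1)},v^{(i)}}\big)\,Z_k$, and since the path weights are nonnegative, $I_f(v_a,v_b;k) = \big(\sum_{\mathcal P_k^{a\to b}}\prod \tilde a\big)\,\|Z_k\|_1$; the common scalar $\|Z_k\|_1$ then cancels in the normalized influence.

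It remains to evaluate the normalizing denominator. Up to the factor $\|Z_k\|_1$, $\sum_{v_i\in\mathcal V}I_f(v_a,v_i;k)$ equals the total weight $\sum_{v_i}\sum_{\mathcal P_k^{a\to v_i}}\prod\tilde a$ of all length-$k$ walks attached to $v_a$; since the normalized weights along any walk multiply to the probability of that walk under the row-stochastic transition matrix $\tilde A = D^{-1}A$, this sum telescopes to $\tilde A^{k}\mathbf 1 = \mathbf 1$, i.e. to $1$. Combining the two pieces yields $\tilde I_f(v_a,v_b;k) = \sum_{\mathcal P_k^{a\to b}}\prod_{i=k}^{1}\tilde a_{v^{(i-1)},v^{(i)}}$, as claimed. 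I expect the expectation step to be the main obstacle: one must justify — or adopt as a precise modeling assumption, as the cited works do — that the non-edge-weight matrix factor is the same in expectation for every path, so that the edge-weight path sum can be pulled out front; the single-layer chain-rule bookkeeping and the row-stochastic telescoping are routine by comparison.
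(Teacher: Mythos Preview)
Your proposal is correct and matches the paper's approach: the paper does not give its own argument for this lemma but simply defers to \cite{xu2018representation}, and the unrolling-by-chain-rule, path-sum decomposition, Bernoulli-mask expectation, and row-stochastic normalization you outline are precisely the ingredients of that reference's Theorem~1. Your caveat about the expectation step (that one must \emph{assume} the activation masks are i.i.d.\ Bernoulli and independent of the weights so the non-edge-weight factor is path-independent) is exactly the modeling assumption made there, so there is no gap.
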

		
		\begin{proof}
			See \cite{xu2018representation} for the detailed proof.
		\end{proof}
		
		The product term in Eq. (\ref{eq:appendix_0}) is the probability of a given path $\mathcal P_k^{a \rightarrow b}$.
		Therefore, the right hand side in Eq. (\ref{eq:appendix_0}) is the sum over probabilities of all possible paths of length $k$ from $v_a$ to $v_b$, which is the probability that a random walk starting at $v_a$ ends at $v_b$ after taking $k$ steps.
		
		\begin{lemma}
		\label{lemma:2}
			Let $\mathcal U_j^{a \rightarrow b}$ be a path $[v^{(j)}, v^{(j-1)},\cdots, v^{(0)}]$ of length $j$ from node $v_a$ to node $v_b$, where $v^{(j)} = v_a$, $v^{(0)} = v_b$, $v^{(i-1)} \in \mathcal N(v^{(i)})$ for $i = j,\cdots, 1$, and all nodes along the path are unlabeled except $v^{(0)}$.
			Then we have
			\begin{equation}
				I_l(v_a, v_b; k) = \sum_{j=1}^k \sum_{\mathcal U_j^{a \rightarrow b}} \prod_{i=j}^1 \tilde a_{v^{(i-1)}, v^{(i)}},
			\end{equation}
			where $\tilde a_{v^{(i-1)}, v^{(i)}}$ is the normalized weight of edge $(v^{(i)}, v^{(i-1)})$.
		\end{lemma}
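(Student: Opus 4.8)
The plan is to differentiate the LPA recursion with respect to $y_b$ and unroll the resulting linear recurrence into a sum over paths. Unlike the GCN setting of Lemma~\ref{lemma:1}, LPA is a linear map, so no expectation is needed and the argument is purely algebraic; the only subtlety is bookkeeping for the reset step, Eq.~(\ref{eq:lp_2}).

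First I would record the scalar form of the update. Because of the reset, every labeled node satisfies $y_j^{(t)} = y_j^{(0)} = y_j$ for all $t$, while for an unlabeled node $v_c$ Eq.~(\ref{eq:lp}) reads $y_c^{(t)} = \sum_{v_d \in \mathcal N(v_c)} \tilde a_{cd}\, y_d^{(t-1)}$ with $\tilde a_{cd} = a_{cd}/d_{cc}$. Splitting the neighbor sum into its labeled and unlabeled parts and differentiating with respect to $y_b$ (where $v_b$ is labeled), using $\partial y_d^{(t-1)}/\partial y_b = 0$ for any labeled $v_d \neq v_b$ and $\partial y_b^{(t-1)}/\partial y_b = 1$, yields for every unlabeled $v_c$
\begin{equation}
	g_c^{(t)} := \frac{\partial y_c^{(t)}}{\partial y_b} = \tilde a_{cb} + \sum_{v_d \in \mathcal N(v_c),\ v_d \text{ unlabeled}} \tilde a_{cd}\, g_d^{(t-1)},
\end{equation}
with base case $g_c^{(0)} = 0$ (since $y_c^{(0)} = 0$ for unlabeled nodes), and hence $g_c^{(1)} = \tilde a_{cb}$.

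Next I would prove the claimed identity by induction on $k$, simultaneously for all unlabeled nodes. The case $k=1$ is $g_a^{(1)} = \tilde a_{ab}$, which is exactly the $j=1$ term (the single path $[v_a, v_b]$, with the convention $\tilde a_{ab}=0$ when $v_b \notin \mathcal N(v_a)$). For the inductive step, substitute the induction hypothesis for each unlabeled neighbor $v_d$ of $v_a$ into the recursion above: prepending $v_a$ to an interior-unlabeled path of length $j$ from $v_d$ to $v_b$ produces an interior-unlabeled path of length $j{+}1$ from $v_a$ to $v_b$ whose weight gains the factor $\tilde a_{ad}$, so the double sum over $(v_d, \mathcal U_j^{v_d \rightarrow b})$ becomes $\sum_{j=2}^{k}\sum_{\mathcal U_j^{a \rightarrow b}}\prod_{i=j}^{1}\tilde a_{v^{(i-1)},v^{(i)}}$ after reindexing, and combining this with the standalone $\tilde a_{ab}$ term recovers the sum from $j=1$ to $k$. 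The point requiring care is that the derivative propagates only through unlabeled intermediate vertices — it terminates at $v_b$ (contributing factor $1$) and vanishes at every other labeled node — which is precisely why the paths $\mathcal U_j^{a\rightarrow b}$ are required to have unlabeled interiors, and why truncation at length $k$ (rather than an infinite geometric series) is the correct accounting: the backward walk from $v_a$ has only $k$ iterations in which to first reach the absorbing node $v_b$. Since $v_a$ is assumed unlabeled, the recursion genuinely applies to it, closing the induction.
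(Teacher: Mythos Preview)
Your argument is correct. You differentiate the LPA recursion once with respect to $y_b$, obtain the affine recurrence $g_c^{(t)} = \tilde a_{cb} + \sum_{v_d\ \text{unlabeled}} \tilde a_{cd}\, g_d^{(t-1)}$ with $g_c^{(0)}=0$, and prove the path formula by induction on $k$. The paper takes a different but equivalent route: it views the reset as making $y_b$ reappear as each $y_b^{(j)}$ for $j=0,\dots,k-1$, applies the multivariable chain rule to write $\partial y_a^{(k)}/\partial y_b = \sum_{j=0}^{k-1} \partial y_a^{(k)}/\partial y_b^{(j)}$, and then unrolls each summand separately into a sum over unlabeled-interior paths of the \emph{fixed} length $k-j$. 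In effect the paper indexes the outer sum by ``at which iteration did $v_b$'s label enter the walk,'' whereas you index by path length via an induction that absorbs the constant source term $\tilde a_{ab}$ at every step. Your version is slightly more economical (no auxiliary variables $y_b^{(j)}$ to track) and makes the truncation at $k$ transparent from the base case; the paper's version makes the random-walk interpretation of each fixed-length contribution more explicit and ties in directly with Lemma~\ref{lemma:1}. Both are short and either would be acceptable.
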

		
		To intuitively understand this lemma, note that there are two differences between Lemma \ref{lemma:1} and Lemma \ref{lemma:2}:
		(1) In Lemma \ref{lemma:1}, $\tilde I_f(v_a, v_b; k)$ sums over all paths from $v_a$ to $v_b$ of length $k$, but in Lemma \ref{lemma:2}, $I_l(v_a, v_b; k)$ sums over all paths from $v_a$ to $v_b$ of length no more than $k$.
		The is because in LPA, $v_b$'s label is reset to its initial value after each iteration, which means that the label of $v_b$ serves as a constant signal that begins propagating in the graph again and again after each iteration.
		(2) In Lemma \ref{lemma:1} we consider all possible paths from $v_a$ to $v_b$, but in Lemma \ref{lemma:2}, the paths are restricted to contain unlabeled nodes only.
		The reason here is the same as above:
		Since the labels of labeled nodes are reset to their initial values after each iteration in LPA, the influence of $v_b$'s label will be absorbed in labeled nodes, and the propagation of $v_b$'s label will be cut off at these nodes.
		Therefore, $v_b$'s label can only flow to $v_a$ along the paths with unlabeled nodes only.
		See Figure \ref{fig:lemma_2} for an illustrating example showing the label propagation in LPA.
		
		\begin{figure}[t]
		\centering
        \begin{subfigure}[b]{0.2\textwidth}
            \includegraphics[width=\textwidth]{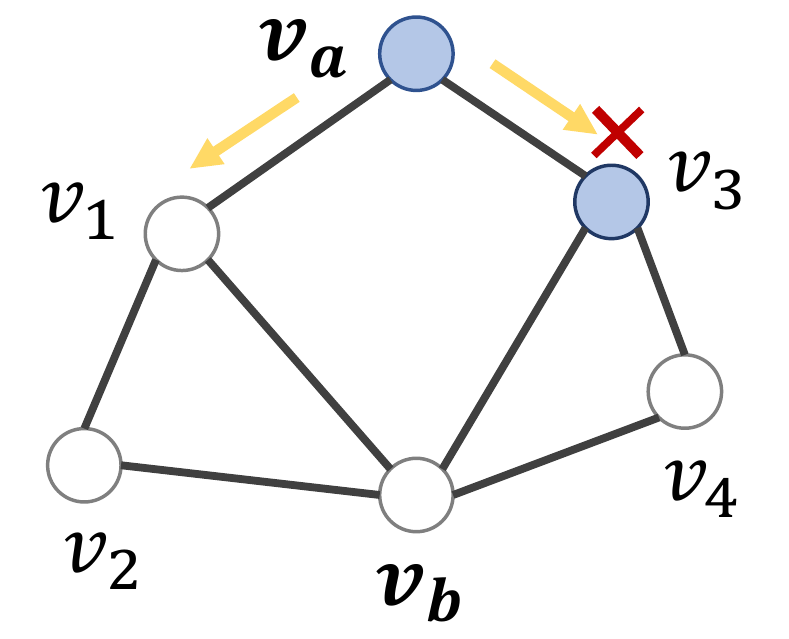}
            \caption{Iteration 1}
            \label{fig:d1}
        \end{subfigure}
        \hfill
        \begin{subfigure}[b]{0.2\textwidth}
            \includegraphics[width=\textwidth]{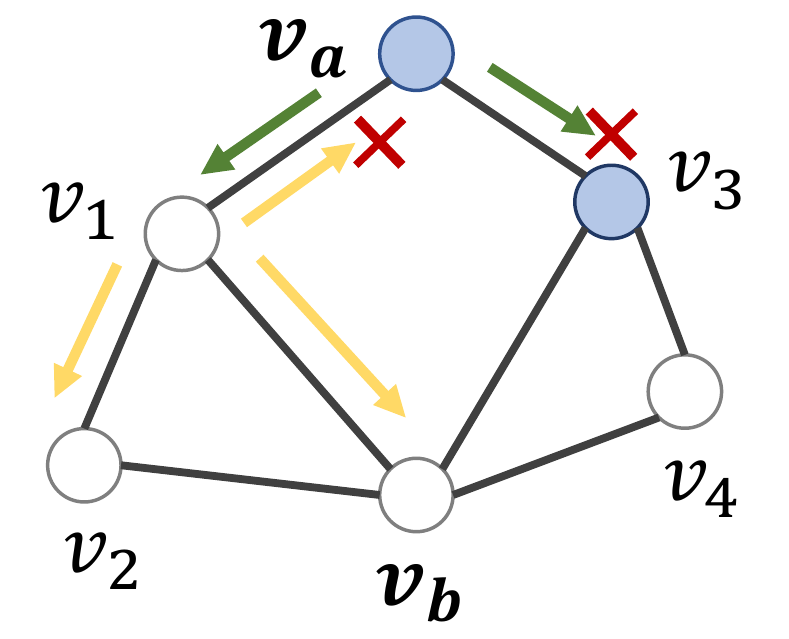}
            \caption{Iteration 2}
            \label{fig:d2}
        \end{subfigure}
        \hfill
        \begin{subfigure}[b]{0.2\textwidth}
            \includegraphics[width=\textwidth]{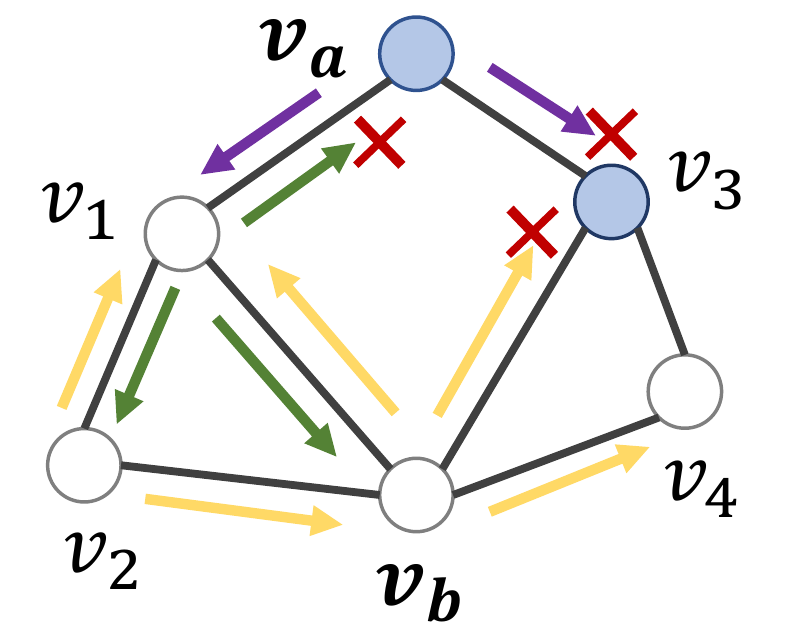}
            \caption{Iteration 3}
            \label{fig:d2}
        \end{subfigure}
        \hfill
        \begin{subfigure}[b]{0.2\textwidth}
            \includegraphics[width=\textwidth]{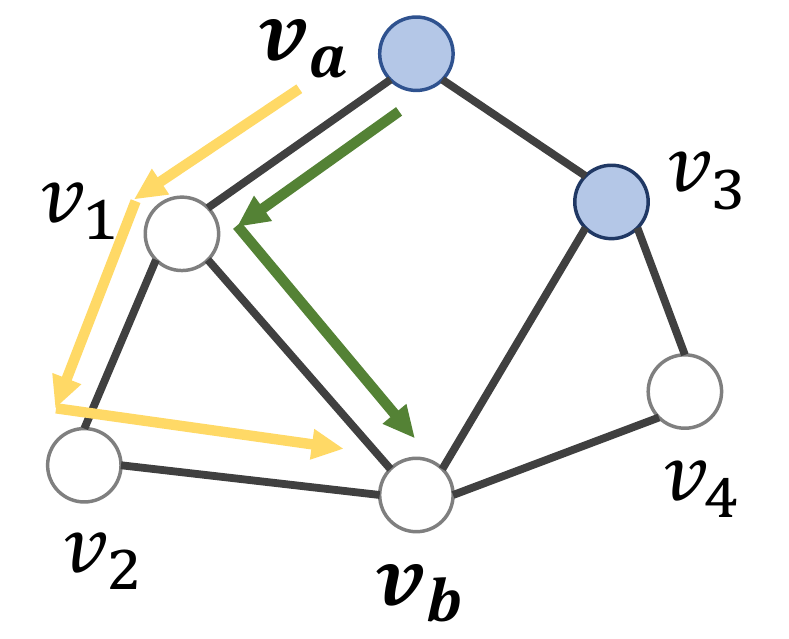}
            \caption{Paths from $v_a$ to $v_b$}
            \label{fig:d2}
        \end{subfigure}
        \caption{An illustrating example of label propagation in LPA. Suppose labels are propagated for three iterations, and no self-loop exists. Blue nodes are labeled while white nodes are unlabeled. (a) $v_a$'s label propagates to $v_1$ (yellow arrows). Note that the propagation of $v_a$'s label to $v_3$ is cut off since $v_3$ is labeled thus absorbing $v_a$'s label. (b) $v_a$'s label that propagated to $v_1$ further propagates to $v_2$ and $v_b$ (yellow arrows). Meanwhile, $v_a$'s label is reset to its initial value then propagates from $v_a$ again (green arrows). (c) Label propagation in iteration 3. Purple arrows denote the propagation of $v_a$'s label starting from $v_a$ for the third time. (d) All possible paths of length no more than three from $v_a$ to $v_b$ containing unlabeled nodes only. Note that there is no path of length one from $v_a$ to $v_b$.}
        \label{fig:lemma_2}
    	\end{figure}
		
		\begin{proof}
			As mentioned above, a significant difference between LPA and GCN is that all labeled nodes are reset to its original labels after each iteration in LPA.
			This implies that the initial label $y_b$ of node $v_b$ appears not only as $y_b^{(0)}$, but also as every $y_b^{(j)}$ for $j = 1,\cdots, k-1$.
			Therefore, the influence of $y_b$ on $y_a^{(k)}$ is the cumulative influence of $y_b^{(j)}$ on $y_a^{(k)}$ for $j = 0, 1,\cdots, k-1$:
			\begin{equation}
			\label{eq:appendix_1}
				I_l(v_a, v_b; k) = \frac{\partial y_a^{(k)}}{\partial y_b} = \sum_{j=0}^{k-1} \frac{\partial y_a^{(k)}}{\partial y_b^{(j)}}.
			\end{equation}
			According to the updating rule of LPA, we have
			\begin{equation}
			\label{eq:decompose}
				\frac{\partial y_a^{(k)}}{\partial y_b^{(j)}} = \frac{\partial \sum_{v_z \in \mathcal N(v_a)} \tilde a_{az} y_z^{(k-1)}}{\partial y_b^{(j)}} = \sum_{v_z \in \mathcal N(v_a)} \tilde a_{az} \frac{\partial y_z^{(k-1)}}{\partial y_b^{(j)}}.
			\end{equation}
			In the above equation, the derivative $\frac{\partial y_a^{(k)}}{\partial y_b^{(j)}}$ is decomposed into the weighted average of $\frac{\partial y_z^{(k-1)}}{\partial y_b^{(j)}}$, where $v_z$ traverses all neighbors of $v_a$.
			For those $v_z$'s that are initially labeled, $y_z^{(k-1)}$ is reset to their initial labels in each iteration.
			Therefore, they are always constant and independent of $y_b^{(j)}$, meaning that their derivatives w.r.t. $y_b^{(j)}$ are zero.
			So we only need to consider the terms where $v_z$ is an unlabeled node:
			\begin{equation}
			\label{eq:expansion}
				\frac{\partial y_a^{(k)}}{\partial y_b^{(j)}} = \sum_{v_z \in \mathcal N(v_a), z > m} \tilde a_{az} \frac{\partial y_z^{(k-1)}}{\partial y_b^{(j)}},
			\end{equation}
			where $z>m$ means $v_z$ is unlabeled.
			To intuitively understand Eq. (\ref{eq:expansion}), one can imagine that we perform a random walk starting from node $v_a$ for one step, where the ``transition probability'' is the edge weights $\tilde a$, and all nodes in this random walk are restricted to unlabeled nodes only.
			Note that we can further decompose every $y_z^{(k-1)}$ in Eq. (\ref{eq:expansion}) in the way similar to what we do for $y_a^{(k)}$ in Eq. (\ref{eq:decompose}).
			So the expansion in Eq. (\ref{eq:expansion}) can be performed iteratively until the index $k$ decreases to $j$.
			This is equivalent to performing all possible random walks for $k - j$ steps starting from $v_a$, where all nodes but the last in the random walk are restricted to be unlabeled nodes:
			\begin{equation}
			\label{eq:full}
				\frac{\partial y_a^{(k)}}{\partial y_b^{(j)}} = \sum_{v_z \in \mathcal V} \sum_{\mathcal U_{k-j}^{a \rightarrow z}} \left( \prod_{i=k-j}^1 \tilde a_{v^{(i-1)}, v^{(i)}} \right) \frac{\partial y_z^{(j)}}{\partial y_b^{(j)}},
			\end{equation}
			where $v_z$ in the first summation term is the end node of a random walk, $\mathcal U_{k-j}^{a \rightarrow z}$ in the second summation term is an unlabeled-nodes-only path from $v_a$ to $v_z$ of length $k-j$, and the product term is the probability of a given path $\mathcal U_{k-j}^{a \rightarrow z}$.
			Consider the last term $\frac{\partial y_z^{(j)}}{\partial y_b^{(j)}}$ in Eq. (\ref{eq:full}).
			We know that $\frac{\partial y_z^{(j)}}{\partial y_b^{(j)}} = 0$ for all $z \neq b$ and $\frac{\partial y_z^{(j)}}{\partial y_b^{(j)}} = 1$ for $z=b$, which means that only those random-walk paths that end exactly at $v_b$ (i.e., the end node $v_z$ is exactly $v_b$) count for the computation in Eq. (\ref{eq:full}).
			Therefore, we have
			\begin{equation}
			\label{eq:appendix_2}
				\frac{\partial y_a^{(k)}}{\partial y_b^{(j)}} = \sum_{\mathcal U_{k-j}^{a \rightarrow b}} \prod_{i=k-j}^1 \tilde a_{v^{(i-1)}, v^{(i)}},
			\end{equation}
			where $\mathcal U_{k-j}^{a \rightarrow b}$ is a path from $v_a$ to $v_b$ of length $k-j$ containing only unlabeled nodes except $v_b$.
			Substituting the right hand term of Eq. (\ref{eq:appendix_1}) with Eq. (\ref{eq:appendix_2}), we obtain that
			\begin{equation}
				I_l(v_a, v_b; k) = \sum_{j=0}^{k-1} \sum_{\mathcal U_{k-j}^{a \rightarrow b}} \prod_{i=k-j}^1 \tilde a_{v^{(i-1)}, v^{(i)}} = \sum_{j=1}^{k} \sum_{\mathcal U_j^{a \rightarrow b}} \prod_{i=j}^1 \tilde a_{v^{(i-1)}, v^{(i)}}.
			\end{equation}
		\end{proof}
		
		Now Theorem \ref{thm:influence} can be proved by combining Lemma \ref{lemma:1} and Lemma \ref{lemma:2}:
		
		\begin{proof}
			Suppose that whether a node is labeled or not is independent of each other for the given graph.
			Then we have
			\begin{equation}
			\begin{split}
				\mathbb E \big[ I_l(v_a, v_b; k) \big] = & \mathbb E \left[ \sum_{j=1}^{k} \sum_{\mathcal U_j^{a \rightarrow b}} \prod_{i=j}^1 \tilde a_{v^{(i-1)}, v^{(i)}} \right] = \sum_{j=1}^{k} \mathbb E \left[ \sum_{\mathcal U_j^{a \rightarrow b}} \prod_{i=j}^1 \tilde a_{v^{(i-1)}, v^{(i)}} \right]\\
				= & \sum_{j=1}^k \sum_{\mathcal P_j^{a \rightarrow b}} \Pr \big( \mathcal P_j^{a \rightarrow b} \text{ is an unlabeled-nodes-only path} \big) \prod_{i=j}^1 \tilde a_{v^{(i-1)}, v^{(i)}}\\
				= & \sum_{j=1}^k \sum_{\mathcal P_j^{a \rightarrow b}} \beta^j \prod_{i=j}^1 \tilde a_{v^{(i-1)}, v^{(i)}}\\
				= & \sum_{j=1}^k \beta^j \tilde I_f(v_a, v_b; j).
			\end{split}
			\end{equation}
		\end{proof}

	\subsection{Proof of Theorem \ref{thm:lpa}}
	\label{app:c}
		\begin{proof}
			Denote the set of labels as $\mathcal L$.
			Since different label dimensions in $y_a^{(\cdot)}$ do not interact with each other when running LPA, the value of the $y_a$-th dimension in $y_a^{(\cdot)}$ (denoted by $y_a^{(\cdot)}[y_a]$) comes only from the nodes with initial label $y_a$.
			It is clear that
			\begin{equation}
				y_a^{(k)}[y_a] = \sum_{v_b: y_b = y_a} \sum_{j=1}^k \sum_{\mathcal U_j^{a \rightarrow b}} \prod_{i=j}^1 \tilde a_{v^{(i-1)}, v^{(i)}},
			\end{equation}
			which equals $\sum_{v_b: y_b = y_a} I_l(v_a, v_b; k)$ according to Lemma \ref{lemma:2}.
			Therefore, we have
			\begin{equation}
				\Pr(\hat y_a = y_a) = \frac{y_a^{(k)}[y_a]}{\sum_{i \in \mathcal L} y_a^{(k)}[i]} \propto y_a^{(k)}[y_a] = \sum_{v_b: y_b = y_a} I_l(v_a, v_b; k)
			\end{equation}
		\end{proof}

	\subsection{Proof of Theorem \ref{thm:decrease}}
	\label{app:d}
		In this proof we assume that the dimension of node representations is one, but note that the conclusion can be easily generalized to the case of multi-dimensional representations since the function $D({\bf x})$ can be decomposed into the sum of one-dimensional cases.
		In the following of this proof, we still use bold notations ${\bf x}_i^{(k)}$ and ${\bf h}_i^{(k)}$ to denote node representations, but keep in mind that they are scalars rather than vectors.
		
		We give two lemmas before proving Theorem \ref{thm:decrease}.
		The first one is about the gradient of $D({\bf x})$:
		\begin{lemma}
		\label{lemma:3}
			${\bf h}_i^{(k)} = {\bf x}_i^{(k)} - \frac{\partial D({\bf x}^{(k)})}{\partial {\bf x}_i^{(k)}}$.
		\end{lemma}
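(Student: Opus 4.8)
The plan is to prove the identity by a direct differentiation of the energy $D$, working in the one-dimensional setting already fixed in the proof preamble, so that ${\bf x}_i^{(k)}$ and ${\bf h}_i^{(k)}$ are scalars and $D({\bf x}^{(k)}) = \frac{1}{2}\sum_{v_p,v_q}\tilde a_{pq}({\bf x}_p^{(k)} - {\bf x}_q^{(k)})^2$. Fix the layer index $k$. The single fact that drives everything is that the normalized weights are row-stochastic, $\sum_{v_q\in\mathcal N(v_i)}\tilde a_{iq} = 1$ — the same property that turns Eqs.~(\ref{eq:lp}) and~(\ref{eq:gcn}) into weighted averages.

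First I would collect the terms of the double sum that depend on ${\bf x}_i^{(k)}$ and differentiate. Using the symmetry $\|{\bf x}_p^{(k)} - {\bf x}_q^{(k)}\|^2 = \|{\bf x}_q^{(k)} - {\bf x}_p^{(k)}\|^2$ to fold the ``incoming'' terms ($q=i$) onto the ``outgoing'' terms ($p=i$), the derivative reduces to
\begin{equation}
\frac{\partial D({\bf x}^{(k)})}{\partial {\bf x}_i^{(k)}} = \sum_{v_q\in\mathcal N(v_i)}\tilde a_{iq}\big({\bf x}_i^{(k)} - {\bf x}_q^{(k)}\big) = {\bf x}_i^{(k)}\!\!\sum_{v_q\in\mathcal N(v_i)}\!\!\tilde a_{iq} \;-\; \sum_{v_q\in\mathcal N(v_i)}\tilde a_{iq}{\bf x}_q^{(k)}.
\end{equation}
Row-stochasticity collapses the first term to ${\bf x}_i^{(k)}$, and the second term is exactly the aggregated representation ${\bf h}_i^{(k)}$ by its definition; hence $\partial D({\bf x}^{(k)})/\partial {\bf x}_i^{(k)} = {\bf x}_i^{(k)} - {\bf h}_i^{(k)}$, which rearranges to the claim. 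The multi-dimensional statement then follows coordinatewise, as the proof preamble already observes, since $D$ splits as a sum over coordinates.

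The one step that needs genuine care — and the main potential obstacle — is the bookkeeping of the two roles that ${\bf x}_i^{(k)}$ plays inside $\sum_{v_p,v_q}$: it appears once as $p$ and once as $q$, and one must check that combining these contributions (via the symmetry of $\|\cdot\|^2$ together with the structure of the normalized adjacency) really does leave only the single ``outgoing'' sum written above, rather than an extra residual term. Once Lemma~\ref{lemma:3} is established it plays a transparent role in the proof of Theorem~\ref{thm:decrease}: it identifies the aggregation map ${\bf x}^{(k)}\mapsto{\bf h}^{(k)}$ as one gradient-descent step of unit step size on the convex quadratic $D$, so that a companion bound on the curvature of $D$ will yield $D({\bf h}^{(k)})\le D({\bf x}^{(k)})$.
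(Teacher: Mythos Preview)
Your proposal is correct and takes essentially the same approach as the paper's one-line proof: directly compute $\partial D({\bf x}^{(k)})/\partial {\bf x}_i^{(k)} = \sum_{v_j\in\mathcal N(v_i)}\tilde a_{ij}({\bf x}_i^{(k)} - {\bf x}_j^{(k)})$, then invoke row-stochasticity $\sum_{j}\tilde a_{ij}=1$ to collapse this to ${\bf x}_i^{(k)} - {\bf h}_i^{(k)}$. The paper also makes, immediately after the lemma, the same observation you close with --- that aggregation is a unit-step gradient-descent move on $D$ --- before pairing it with the Hessian bound in Lemma~\ref{lemma:4} to finish Theorem~\ref{thm:decrease}.
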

		
		\begin{proof}
			${\bf x}_i^{(k)} - \frac{\partial D({\bf x}^{(k)})}{\partial {\bf x}_i^{(k)}} = {\bf x}_i^{(k)} - \sum_{v_j \in \mathcal N(v_i)} \tilde a_{ij} ({\bf x}_i^{(k)} - {\bf x}_j^{(k)}) = \sum_{v_j \in \mathcal N(v_i)} \tilde a_{ij} {\bf x}_j^{(k)} = {\bf h}_i^{(k)}$.
		\end{proof}
		
		It is interesting to see from Lemma \ref{lemma:3} that the aggregation step in GCN is equivalent to running gradient descent for one step with a step size of one.
		However, this is not able to guarantee that $D({\bf h}^{(k)}) \leq D({\bf x}^{(k)})$ because the step size may be too large to reduce the value of $D$.
		
		The second lemma is about the Hessian of $D({\bf x})$:
		
		\begin{lemma}
		\label{lemma:4}
			$\nabla^2 D({\bf x}) \preceq 2I$, or equivalently, $2I - \nabla^2 D({\bf x})$ is a positive semidefinite matrix.
		\end{lemma}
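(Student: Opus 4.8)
The plan is to use the fact that, in the one–dimensional reduction adopted in this proof, $D$ is a \emph{homogeneous quadratic} in the stacked vector of node values, so $\nabla^2 D$ is a constant symmetric matrix $Q$ with $D({\bf x}) = \tfrac12 {\bf x}^\top Q {\bf x}$. Consequently $Q \preceq 2I$ is equivalent to the scalar inequality $D({\bf v}) \le \|{\bf v}\|_2^2$ holding for every vector ${\bf v}$, and the whole argument splits into (i) computing $Q$ explicitly and (ii) bounding its largest eigenvalue by $2$.

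For (i) I would differentiate $D({\bf x}) = \tfrac12 \sum_{i,j} \tilde a_{ij}({\bf x}_i - {\bf x}_j)^2$ twice. Writing $\tilde A$ for the matrix with entries $\tilde a_{ij} = a_{ij}/d_{ii}$, which is row-stochastic ($\tilde A {\bf 1} = {\bf 1}$) with nonnegative entries, one obtains $\partial^2 D / \partial {\bf x}_i^2 = \sum_{j\ne i}(\tilde a_{ij}+\tilde a_{ji})$ and $\partial^2 D / \partial {\bf x}_i \partial {\bf x}_j = -(\tilde a_{ij}+\tilde a_{ji})$ for $i\ne j$; hence $Q$ is the graph Laplacian $\hat D - \hat A$ of the symmetric, nonnegatively weighted graph with off-diagonal weight matrix $\hat A_{ij} = \tilde a_{ij}+\tilde a_{ji}$. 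In particular $Q \succeq 0$, which is one half of the claim and is exactly the fact — already implicit in Lemma~\ref{lemma:3} — that the aggregation step performs one step of gradient descent on the convex function $D$.

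For (ii) I would pass to the Rayleigh quotient: since ${\bf v}^\top Q {\bf v} = 2D({\bf v}) = \sum_{i,j}\tilde a_{ij}({\bf v}_i - {\bf v}_j)^2$, it suffices to bound this by $2\|{\bf v}\|_2^2$, which I would do via $({\bf v}_i - {\bf v}_j)^2 \le 2{\bf v}_i^2 + 2{\bf v}_j^2$ followed by regrouping and the identity $\sum_j \tilde a_{ij} = 1$; equivalently, the relevant operator is $I - \tilde A$, whose spectrum lies in $[0,2]$ because $\tilde A$ is power-bounded (spectral radius $\le 1$ by Perron--Frobenius for a nonnegative row-stochastic matrix). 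Feeding $Q \preceq 2I$ together with Lemma~\ref{lemma:3} into the exact second-order expansion of the quadratic $D$ then also closes Theorem~\ref{thm:decrease}: with ${\bf g} = \nabla D({\bf x}^{(k)})$ one gets $D({\bf h}^{(k)}) - D({\bf x}^{(k)}) = -{\bf g}^\top {\bf g} + \tfrac12 {\bf g}^\top Q {\bf g} = -{\bf g}^\top (I - \tfrac12 Q){\bf g} \le 0$.

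The step I expect to be the main obstacle is controlling the asymmetry of the normalization $\tilde A = D^{-1}A$ in part (ii): the column sums of $\tilde A$ need not equal $1$, so the symmetrized weighted graph $\hat A$ can have heterogeneous degrees, and a crude Gershgorin bound on $\hat D - \hat A$ only yields $\lambda_{\max} \le 2\max_i \hat D_{ii}$, which need not be $\le 2$. The clean resolution — consistent with the paper's decision to phrase everything through the row-stochastic operator rather than the symmetric normalization $D^{-1/2}AD^{-1/2}$ — is to work directly with $I - \tilde A$ and its power-boundedness rather than with the symmetrized Laplacian; pinning down that identification is the delicate point, after which the eigenvalue bound and hence the lemma follow in a couple of lines.
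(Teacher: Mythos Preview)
Your final route --- bound the spectrum via the row-stochastic operator $\tilde A = D^{-1}A$ --- is exactly what the paper does: it writes $\nabla^2 D = I - D^{-1}A$, notes that $D^{-1}A$ is a Markov matrix with eigenvalues in $[-1,1]$, and concludes that $2I - \nabla^2 D = I + D^{-1}A$ has eigenvalues in $[0,2]$ and is therefore positive semidefinite. That is the entire argument. The difference is that you compute the Hessian correctly as the symmetric Laplacian $Q = \hat D - \hat A$ with off-diagonal weights $\hat A_{ij} = \tilde a_{ij} + \tilde a_{ji}$, whereas the paper's displayed Hessian $I - D^{-1}A$ is not even symmetric; the ``identification'' you flag as the delicate point is precisely the step the paper elides by writing down the wrong matrix.

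That step is not merely delicate --- it fails. Because the column sums $c_j = \sum_i \tilde a_{ij}$ need not equal $1$, the true Hessian is $Q = I + \operatorname{diag}(c) - \tilde A - \tilde A^\top$, and this can have largest eigenvalue strictly greater than $2$. On the three-node star with self-loops (center $v_1$, leaves $v_2,v_3$, all edge weights $1$) one finds $\lambda_{\max}(Q) = 5/2$, attained at ${\bf v}=(2,-1,-1)^\top$; equivalently $\sum_{i,j}\tilde a_{ij}({\bf v}_i-{\bf v}_j)^2 = 15 > 12 = 2\|{\bf v}\|_2^2$. So the Rayleigh inequality you set out to prove in (ii) is actually false, and neither your elementary bound $({\bf v}_i-{\bf v}_j)^2 \le 2{\bf v}_i^2+2{\bf v}_j^2$ (which overshoots exactly because only the \emph{row} sums of $\tilde A$ equal $1$) nor the spectral-radius fact for $\tilde A$ can salvage it. In short, you and the paper converge on the same mechanism, but your more careful Hessian computation exposes a genuine gap that the paper's asymmetric ``Hessian'' conceals; the lemma, as stated for the row-normalized weights $\tilde a_{ij}$, does not hold in general.
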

		
		\begin{proof}
			We first calculate the Hessian of $D({\bf x}) = \frac{1}{2} \sum_{v_i, v_j} \tilde a_{ij} \| {\bf x}_i - {\bf x}_j \|_2^2$:
			\begin{equation}
				\nabla^2 D({\bf x}) =
				\left[
					\begin{matrix}
						1-\tilde a_{11} & -\tilde a_{12} & \cdots & -\tilde a_{1n} \\
						-\tilde a_{21} & 1-\tilde a_{22} & \cdots & -\tilde a_{2n} \\
						\vdots & \vdots & \ddots & \vdots \\
						-\tilde a_{n1} & -\tilde a_{n2} & \cdots & 1-\tilde a_{nn} \\
					\end{matrix}
				\right]
				= I - D^{-1}A.
			\end{equation}
			Therefore, $2I - \nabla^2 D({\bf x}) = I + D^{-1}A$.
			Since $D^{-1}A$ is Markov matrix (i.e., each entry is non-negative and the sum of each row is one), its eigenvalues are within the range [-1, 1], so the eigenvalues of $I + D^{-1}A$ are within the range [0, 2].
			Therefore, $I + D^{-1}A$ is a positive semidefinite matrix, and we have $\nabla^2 D({\bf x}) \preceq 2I$.
		\end{proof}
		
		We can now prove Theorem \ref{thm:decrease}:
		
		\begin{proof}
			Since $D$ is a quadratic function, we perform a second-order Taylor expansion of $D$ around ${\bf x}^{{(k)}}$ and obtain the following inequality:
			\begin{equation}
			\begin{split}
				D({\bf h}^{(k)}) = & D({\bf x}^{(k)}) + \nabla D({\bf x}^{(k)})^\top ({\bf h}^{(k)} - {\bf x}^{(k)}) + \frac{1}{2} ({\bf h}^{(k)} - {\bf x}^{(k)})^\top \nabla^2 D({\bf x}) ({\bf h}^{(k)} - {\bf x}^{(k)}) \\
				= & D({\bf x}^{(k)}) - \nabla D({\bf x}^{(k)})^\top \nabla D({\bf x}^{(k)}) + \frac{1}{2} \nabla D({\bf x}^{(k)})^\top \nabla^2 D({\bf x}) \nabla D({\bf x}^{(k)}) \\
				\leq & D({\bf x}^{(k)}) - \nabla D({\bf x}^{(k)})^\top \nabla D({\bf x}^{(k)}) + \nabla D({\bf x}^{(k)})^\top \nabla D({\bf x}^{(k)}) = D({\bf x}^{(k)}).
			\end{split}
			\end{equation}
		\end{proof}
		
		\begin{figure}[h]
			\centering
			\begin{subfigure}[b]{\textwidth}
   				\includegraphics[width=\textwidth]{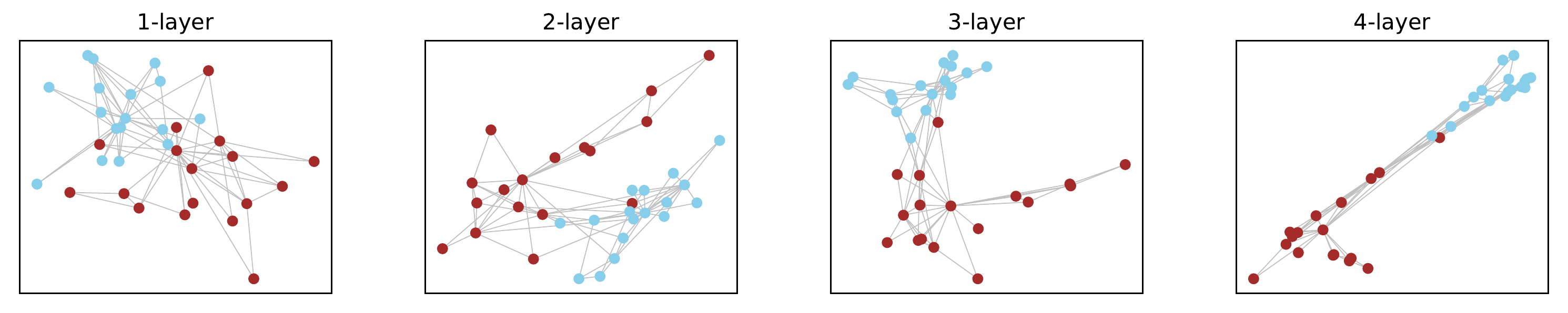}
   				\caption{GCN on the original network}
   				\label{fig:app_karate_1}
			\end{subfigure}
			\hfill
			\begin{subfigure}[b]{\textwidth}
				\includegraphics[width=\textwidth]{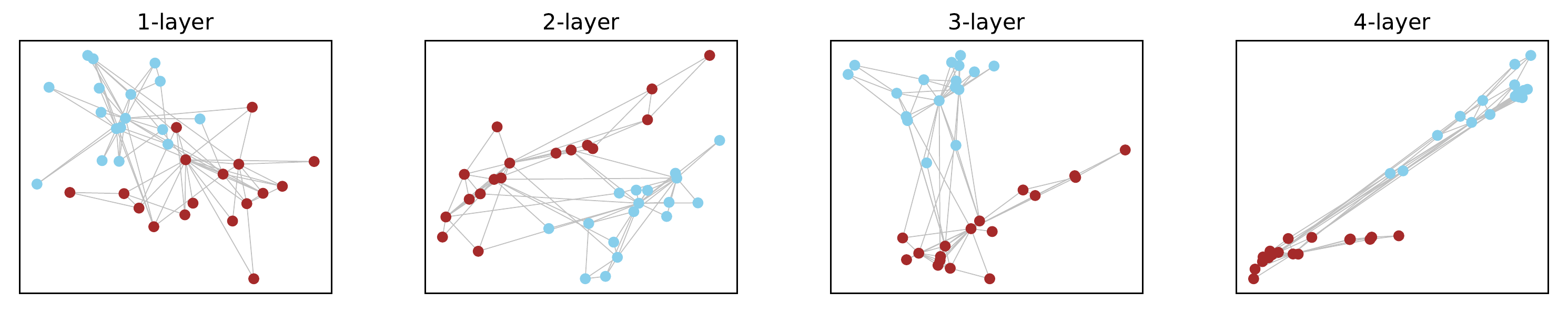}
				\caption{GCN-LPA on the original network}
				\label{fig:app_karate_2}
			\end{subfigure}
			\hfill
			\begin{subfigure}[b]{\textwidth}
   				\includegraphics[width=\textwidth]{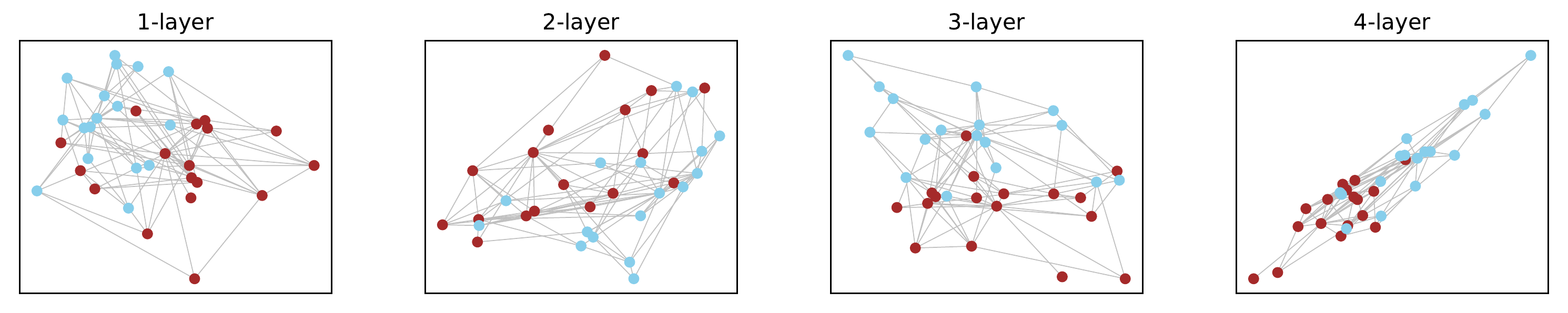}
   				\caption{GCN on the noisy network}
   				\label{fig:app_karate_3}
			\end{subfigure}
			\hfill
			\begin{subfigure}[b]{\textwidth}
				\includegraphics[width=\textwidth]{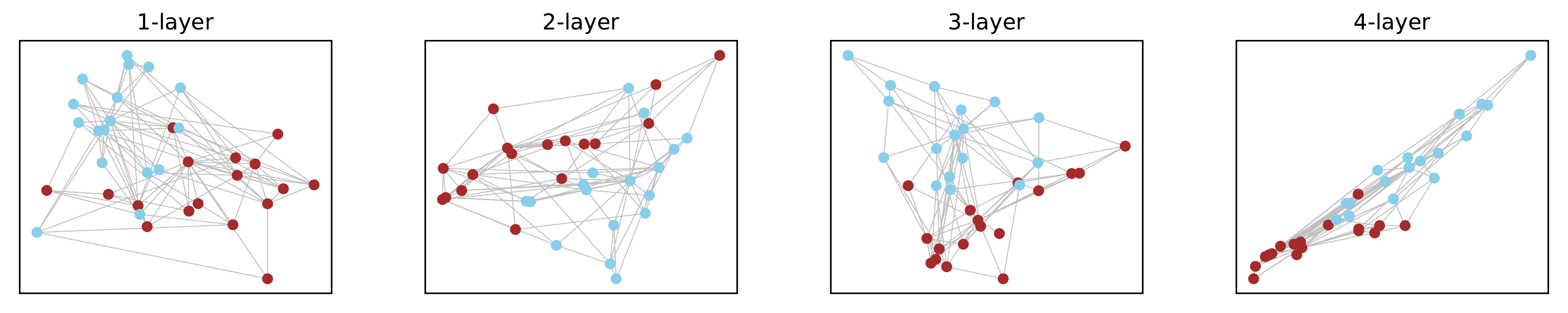}
				\caption{GCN-LPA on the noisy network}
				\label{fig:app_karate_4}
			\end{subfigure}
			\caption{Visualization of GCN and GCN-LPA with 1 $\sim$ 4 layers on karate club network.}
			\label{fig:app_karate}
		\end{figure}

	\subsection{More Visualization Results on Karate Club Network}
	\label{app:e}
		Figure \ref{fig:app_karate} illustrates more visualization of GCN and GCN-LPA on karate club network.
		In each subfigure, we vary the number of layers from 1 to 4 to examine how the learned representations evolve.
		The initial node features are one-hot identity vectors, and the dimension of hidden layers and output layer is 2.
		The transformation matrices are uniformly initialized within range [-1, 1].
		We use sigmoid function as the nonlinear activation function.
		Comparing the four figures in each row, we conclude that the aggregation step and transformation step in GCN and GCN-LPA do benefit the separation of different classes.
		Comparing Figure \ref{fig:app_karate_1} and \ref{fig:app_karate_3} (or Figure \ref{fig:app_karate_2} and \ref{fig:app_karate_4}), we conclude that more inter-class edges will make the separation harder for GCN (or GCN-LPA).		Comparing Figure \ref{fig:app_karate_1} and \ref{fig:app_karate_2} (or Figure \ref{fig:app_karate_3} and \ref{fig:app_karate_4}), we conclude that GCN-LPA is more noise-resistant than GCN, therefore, GCN-LPA can better differentiate classes and identify clustering substructures.

	\subsection{Hyper-parameter Settings}
	\label{app:f}
		The detailed hyper-parameter settings for all datasets are listed in Table \ref{table:ps}.
		In GCN-LPA, we use the same dimension for all hidden layers.
		Note that the number of GCN layers and the number of LPA iterations can actually be different since GCN and LPA are implemented as two independent modules.
		We use grid search to determine hyper-parameters on Cora, and perform fine-tuning on other datasets, i.e., varying one hyper-parameter per time to see if the performance can be further improved.
		The search spaces for hyper-parameters are as follows:
		\begin{itemize}
			\item Dimension of hidden layers: $\{8, 16, 32\}$;
			\item \# GCN layers: $\{1, 2, 3, 4, 5, 6\}$;
			\item \# LPA iterations: $\{1, 2, 3, 4, 5, 6, 7, 8, 9\}$;
			\item L2 weight: $\{ 10^{-7}, 2 \times 10^{-7}, 5 \times 10^{-7}, 10^{-6}, 2 \times 10^{-6}, 5 \times 10^{-6}, 10^{-5}, 2 \times 10^{-5}, 5 \times 10^{-5}, 10^{-4}, 2 \times 10^{-4}, 5 \times 10^{-4}, 10^{-3} \}$;
			\item LPA weight ($\lambda$): $\{0, 1, 2, 5, 10, 15, 20 \}$;
			\item Dropout rate: $\{0, 0.1, 0.2, 0.3, 0.4, 0.5\}$;
			\item Learning rate: $\{0.01, 0.02, 0.05, 0.1, 0.2, 0.5\}$;
		\end{itemize}
		
		\begin{table}[h]
			\centering
			\setlength{\tabcolsep}{10pt}
			\begin{tabular}{c|ccccc}
				\hline
				& \textbf{Cora} & \textbf{Citeseer} & \textbf{Pubmed} & \textbf{Coauthor-CS} & \textbf{Coauthor-Phy} \\
				\hline
				Dimension of hidden layers & 32 & 16 & 32 & 32 & 32 \\
				\# GCN layers & 5 & 2 & 2 & 2 & 2 \\
				\# LPA iterations & 5 & 5 & 1 & 2 & 3 \\
				L2 weight & $1 \times 10^{-4}$ & $5 \times 10^{-4}$ & $2 \times 10^{-4}$ & $1 \times 10^{-4}$ & $1 \times 10^{-4}$ \\
				LPA weight ($\lambda$) & 10 & 1 & 1 & 2 & 1 \\
				Dropout rate & 0.2 & 0 & 0 & 0.2 & 0.2 \\
				Learning rate & 0.05 & 0.2 & 0.1 & 0.1 & 0.05 \\
				\hline
			\end{tabular}
			\caption{Hyper-parameter settings for all datasets.}
			\label{table:ps}
		\end{table}

\end{document}